\newcommand{\maxv}{\max_{\mbf{v}}}
\newcommand{\maxh}{\max_{\mbf{h}}}
\newcommand{\maxvh}{\max_{\{\mbf{v},\mbf{h}\}}}
\newcommand{\phvc}{p({\bf h} | {\bf v})}
\newcommand{\rv}{r({\bf v})}
\DeclareMathOperator*{\argmax}{arg\,max} 
\DeclareMathOperator*{\argmin}{arg\,min} 
\newtheorem{theorem}{Theorem}[section]
\newtheorem{lemma}[theorem]{Lemma}
\newtheorem{proposition}[theorem]{Proposition} 
\newtheorem{corollary}{Corollary}[theorem]
\newtheorem{definition}{Definition}[section]
\newenvironment{remark}{\par\noindent{\bf Remark.\ } \it}
\newcommand{\PP}{\mathrm{Pr}}
\newcommand{\Cov}{\mathrm{Cov}}
\newcommand{\Var}{\mathrm{Var}}
\newcommand{\E}{\mathbf{E}}
\newcommand{\mbf}{\mathbf}
\newcommand{\mbb}{\mathbb}
\newcommand{\bs}{\boldsymbol}
\newcommand{\ta}{\theta}
\newcommand{\pv}{p(\mathbf{v})}
\newcommand{\pvh}{p(\mathbf{v},\mathbf{h})}
\newcommand{\Pv}{P(\mathbf{v})}
\newcommand{\Pvh}{P(\mathbf{v},\mathbf{h})}
\newcommand{\Qv}{Q(\mathbf{v})}
\newcommand{\Qta}{Q(\bs{\ta})}
\newcommand{\Pta}{P(\bs{\ta})}
\newcommand{\bphi}{\bs{\phi'}}
\newcommand{\cond}{\,\,\bigm\vert\,\,}
\newcommand{\simp}{\bs{\Delta}^{m-1}_{\alpha}}
\newcommand{\argv}{\argmax_{\mbf{v}}}
\newcommand{\argh}{\argmax_{\mbf{h}}}
\newcommand{\argvh}{\argmax_{\{\mbf{v},\mbf{h}\}}}
\newcommand{\vs}{\mbf{v}^{\bigstar}}
\newcommand{\vd}{\mbf{v}^{\blacklozenge}}
\newcommand{\erf}{\mathrm{erf}}
\newcommand{\erfc}{\mathrm{erfc}}
\newcommand{\sit}{\sigma_{\theta}}
\newcommand{\bsta}{\bs{\ta}}
\newcommand{\bsv}{\bs{\varphi}}
\begin{document}

\title{Mode-Assisted Unsupervised Learning of Restricted Boltzmann Machines}

\author{Haik Manukian}
\thanks{Equal contribution}
\email{email: hmanukia@ucsd.edu}
\affiliation{Department of Physics, University of California, San Diego, La Jolla, CA 92093}

\author{Yan Ru Pei}
\thanks{Equal contribution}
\email{email: yrpei@ucsd.edu}
\affiliation{Department of Physics, University of California, San Diego, La Jolla, CA 92093}

\author{Sean R.B. Bearden}
\email{email: sbearden@ucsd.edu}
\affiliation{Department of Physics, University of California, San Diego, La Jolla, CA 92093}

\author{Massimiliano Di Ventra}
\email{email: diventra@physics.ucsd.edu}
\affiliation{Department of Physics, University of California, San Diego, La Jolla, CA 92093}

\begin{abstract}
Restricted Boltzmann machines (RBMs) are a powerful class of generative models, but their training requires computing a gradient that, unlike supervised backpropagation on typical loss functions, is notoriously difficult even to approximate.
Here, we show that properly combining standard gradient updates with an {\it off-gradient} direction, constructed from samples of the RBM ground state (mode), improves their training dramatically over traditional gradient methods. This approach, which we call {\it mode training}, promotes faster training and stability, in addition to lower converged relative entropy (KL divergence).
Along with the proofs of stability and convergence of this method, we also demonstrate its efficacy on synthetic datasets where we can compute KL divergences exactly, as well as on a larger machine learning standard, MNIST. The mode training we suggest is quite versatile, as it can be applied in conjunction with any given gradient method, and is easily extended to more general energy-based neural network structures such as deep, convolutional and unrestricted Boltzmann machines.
\end{abstract}

\maketitle

Boltzmann machines~\cite{ackley1985learning} and their restricted version (RBMs), are generative models applied to a variety of machine learning problems~\cite{goodfellow2016deep}. 
They enjoy a universal approximation theorem for discrete probability distributions~\cite{le2008representational}, are used as building blocks for deep-belief networks~\cite{bengio2009learning} and, in no small feat, can even represent correlated states in quantum many-body systems~\cite{carleo2017solving,gao2017efficient}. 

Training RBMs is typically formulated as a gradient descent in the Kullbach-Leibler (KL) divergence between the data distribution defined by a dataset, and the RBM model distribution, parameterized by a set of weights and biases. This unsupervised procedure results in a computationally intractable expectation value popularly approximated by a Markov Chain Monte Carlo (MCMC) procedure dubbed ``contrastive divergence'' (CD)~\cite{hinton2002training}. This approach faces difficulty when the model distribution represented by the RBM contains peaks of probability far away from the elements of the dataset, resulting in ``spurious modes'' that trap the Markov chain~\cite{bengio2009learning}. The limitations of CD, the standard algorithm for training RBMs, combined with the rapid advances in supervised learning approaches, has led to the sideline of their unsupervised learning, known also as ``pretraining'', in favor of supervised backpropagation from random initial conditions~\cite{bengio2009learning}. 

However, many state-of-the-art neural networks have been shown to be vulnerable to what are called adversarial examples~\cite{szegedy2013intriguing}, or slight perturbations of the input that ``fool'' the network. In fact, one of the most popular supervised learning techniques, batch normalization, was found to contribute to this phenomenon~\cite{galloway2019batch}. 
It is known that pretraining can be a strong regularizer~\cite{erhan2010does} resulting in better generalization for supervised models, and an improvement in their unsupervised training could lead to more robust performance in a downstream task. This motivates the search for better unsupervised training methods. 

In a recent work, a memcomputing-assisted training scheme for RBMs~\cite{manukian2019accelerating} was proposed to address this unsupervised training difficulty. 
Memcomputing~\cite{13_memcomputing} is a novel computing paradigm in which memory (time non-locality) assists in the computation of hard computational decision and optimization problems~\cite{traversa2017polynomial, DMMperspective}. 
In the algorithm of Ref.~\onlinecite{manukian2019accelerating}, the difficult model expectation term in the log-likelihood gradient was replaced by a sample of the mode of the RBM's probability distribution obtained from a memcomputing solver, which led to better quality solutions versus CD in a downstream classification task. 
However, despite demonstrating a significant reduction in the number of pretraining iterations necessary to achieve a given level of classification accuracy, as well as a total performance gain over CD, the algorithm of Ref.~\onlinecite{manukian2019accelerating} does not fully exploit samples of the mode, in particular it does not give rise to training advantages over standard methods in the unsupervised setting. Additionally, in the present work, we introduce $i$) a principled schedule for incorporating samples of the RBM ground state into pre-training, $ii$) an appropriate mode-driven learning rate, $iii)$ comparisons to other state-of-the-art unsupervised pre-training approaches without the need of supervised fine-tuning, and $iv$) proofs of advantageous properties of the method.

We show that by appropriately combining the RBM's {\it mode} (ground state) samples and data initiated chains (as in CD) not only improves considerably the model quality over CD and other MCMC procedures, but also improves the {\it stability} of the pre-training routine. 
This {\it mode training} utilizes both the dataset (as in CD) and samples of the mode of the RBM model distribution in the training process to ``push down" spurious modes of the model, whenever they appear. 

Superficially, this method resembles `mode hopping' MCMC proposed in recent literature~\cite{sminchisescu2007generalized,lan2014wormhole}, where local maxima are either found with some optimization method or assumed to be known before hand (via a dataset). 
However, a crucial difference between our mode training for RBMs and mode hopping algorithms is that we do {\it not} use the modal configuration to initiate a new MCMC update to improve the mixing rate. Instead, the mode itself is used to inform the weight updates {\it directly}. The difference is substantial. In fact, since higher energy states are exponentially suppressed, exposing the Markov chain to the mode will most likely get it stuck there, which requires {\it ad hoc} constructions to recover detailed balance. Our mode-training method does not suffer from these drawbacks and is thus a more computationally efficient way to utilize the mode to train RBMs. Furthermore, we show that under a sufficiently large learning rate, sampling the global mode alone is capable of exploring efficiently a multi-modal energy landscape. 

To realize this method in practice, one must supplement standard gradient updates with updates constructed from samples of the ground state of the RBM. Finding this ground state is equivalent to a Quadratic Unconstrained Binary Optimization (QUBO) problem, known to be NP-hard~\cite{computational_complexity_book}. Therefore, although we can compute the ground state of RBMs exactly for small datasets, for efficient mode sampling in realistically sized cases, we employ a memcomputing solver that compares favorably to other state-of-the-art optimizers in efficiently sampling the ground states of similar non-convex landscapes~\cite{traversa2018evidence,sheldon2018taming}. The details of our implementation, including computational complexity and energy comparisons with MCMC, can be found in the appendix that accompanies this work. However, in principle, one could use other optimizers for mode training. 

To corroborate our method, we will show exact KL/log-likelihood achieved on small synthetic datasets and on the MNIST dataset. In all cases, we find that mode training is able to learn more accurate models than several other training methods such as CD, persistent contrastive divergence (PCD), parallel tempering (PT) used in tandem with enhanced gradient RBMs (E-RBMs), and centered RBMs (C-RBMs), as reported in Ref.~\onlinecite{melchior2016center}. 

The paper is organized as follows. In Section~\ref{Train-RBM}, we introduce RBMs and the standard unsupervised training procedures, and identify their main weaknesses. 
Section~\ref{Train-Mode} introduces our mode-training method and its main features. Section~\ref{Results} contains the results of our numerical experiments. 
Finally, we offer our thoughts for future work in Section~\ref{conclusions}.

\section{Training RBMs with MCMC} \label{Train-RBM}
RBMs are undirected graphical models with a bipartite structure that differentiates between $n$ visible nodes, ${\bf v} \in \{0, 1\}^n$ and a set of $m$ latent, or `hidden' nodes, ${\bf h} \in \{0, 1\}^m$, not directly constrained by the data~\cite{goodfellow2016deep}. 
These states are usually taken to be binary but can be generalized. 
Each state of the machine corresponds to an energy of the form
\begin{equation}
\label{eq:E}
E({\bf v}, {\bf h}) = - {\bf a}^T{\bf v} - {\bf b}^T{\bf h} - {\bf v}^T {\bf W} {\bf h},
\end{equation}
where the biases ${\bf a} \in \mathbb{R}^{n}$, ${\bf b} \in \mathbb{R}^{m}$, and weights ${\bf W} \in \mathbb{R}^{n\times m}$ are the learnable parameters. 
Note that an RBM does not allow connections {\it within} a layer. 
This defines a distribution over states given by a Boltzmann-Gibbs distribution, 
\begin{equation}
\label{eq:pvh}
p({\bf v}, {\bf h}) = \frac{e^{-E({\bf v}, {\bf h})}}{\mathcal{Z}}.
\end{equation}
The normalizing factor, $\mathcal{Z} = \sum_{\{{\bf v}\}} \sum_{\{{\bf h}\}} e^{-E({\bf v}, {\bf h})}$, is the formidable partition function that involves the sum over an exponentially scaling number of states, thus making the exact computation of its value infeasible. Additionally, the bipartite structure of the RBM connectivity implies that the hidden nodes are conditionally independent given any visible nodes (and vice versa), with a closed form conditional distribution given by~\cite{hinton2002training} $p(h_j = 1 | {\bf v}) = \sigma( \sum_i w_{ij} v_i + b_j)$, where $\sigma(x) = (1 + e^{-x})^{-1}$. 

We indicate the {\it unique} elements of the dataset for training and testing of the network as $\mathcal{D} = \{ {\bf v}_1, \cdots, {\bf v}_{n_d}\} \subset \Omega$, where $\Omega = \{0,1\}^n$ is the space of all binary sequences of length $n$. We can then write the data distribution as 
\begin{equation}
\label{eq:support}
q({\bf v}) = \sum_{{\bf v}_i \in \Omega} c_i \mathds{1}_{\mathcal{D}}({\bf v}_i),
\end{equation}
where $\mathds{1}$ is the indicator function that evaluates to $1$ if ${\bf v}_i \in \mathcal{D}$ and $0$ otherwise. This effectively defines a 
probability mass function (PMF) over $\Omega$ with non-zero values only for values ${\bf v}_i \in \mathcal{D}$. We then call $\mathcal{D}$ the {\it support} of $q$. 

Let us assume further that all data points have equal amplitude over the support, i.e.,  $c_i = 1/n_d$. Since most real world datasets consist of unique elements with no exact repeats, this class of distributions includes all relevant ones. However, we will see in Sec.~\ref{Results} that our method 
seems to work equally well also for non-uniform distributions. 

Training an RBM then amounts to a search for the appropriate weights and biases, $\theta = \{{\bf W}, {\bf a}, {\bf b}\}$, that will minimize the quantity
\begin{equation}
\label{eq:kl}
\text{KL}(q || p) = \sum_{\{{\bf v}\}} q({\bf v}) \log \frac{q({\bf v})}{p({\bf v})}.
\end{equation}

This is known as the Kullback-Leibler (KL) divergence between the data distribution, $q({\bf v})$, and the model distribution of the RBM over the {\it visible} layer,  $p({\bf v}) = \sum_{\{ {\bf h} \}} p({\bf v}, {\bf h})$, with hidden nodes traced out. The latter can be written as,
\begin{equation}
\label{eq:pv}
p({\bf v}) = \frac{1}{\mathcal{Z}} \prod_{i=1}^{n} e^{a_iv_i} \prod_{j=1}^{m}\left(1 + e^{b_j + \sum_{i' = 1}^{n} w_{i'j}v_{i'}}\right).
\end{equation}

The optimization of Eq.~(\ref{eq:kl}) is typically done with gradient descent over the KL divergence which leads to weight updates of the form~\cite{fischer2012introduction},
\begin{equation}
\label{eq:w_gradient}
\Delta w_{ij} \propto \left[ \langle v_i h_j \rangle_{q({\bf v})p({\bf h} | {\bf v})} - \langle v_i h_j \rangle_{p({\bf v}, {\bf h})} \right].
\end{equation}

The first term on the rhs of Eq.~(\ref{eq:w_gradient}) is an expectation with the hidden nodes driven by the data, and hence is referred to as the {\it data} term. Since the conditional distributions across the hidden nodes are factorial and known in closed form, this inference problem is easy in the RBM case. 
The second term on the rhs of Eq.~(\ref{eq:w_gradient}), instead, is an expectation over the model distribution with no nodes fixed, and called the {\it model} term. 
The exact calculation of this term requires computing the partition function of the RBM, which is proved to be hard even to estimate~\cite{long2010restricted}. It is this term that MCMC algorithms (including CD) attempt to approximate.

\subsection{Contrastive Divergence} 
A popular method to training RBMs is CD, which is a special case of an MCMC method known as Gibbs sampling~\cite{hinton2002training}. 
The Markov chain is initialized from a point in the dataset, ${\bf v}$, then the hidden and visible nodes are sequentially re-sampled a $k$ number of times. A distorted model expectation is then computed from the {\it reconstructed} ${\bf v}^k$. 
In practice, choosing some finite $k$ introduces a bias, but empirically it is found that using $k=1$ gives a sufficient signal for learning~\cite{carreira2005contrastive}. 
Since the CD chain starts from a point in the dataset (i.e., a sample from the data distribution), difficulties arise when the model distribution represented by the RBM contains modes at points where the data distribution has negligible probability. 
CD will have a hard time finding and hence pushing down these spurious modes. This, coupled with the prohibitively slow mixing of this MCMC method due to random-walk exploration, and typical high dimensionality of the problem, renders CD not a particularly effective method for unsupervised learning. 

\section{Mode Informed Weight Updates}\label{Train-Mode}

After these preliminaries we can now describe our mode-training method. In a nutshell, it consists in replacing the average in the model term of  Eq.~(\ref{eq:w_gradient}) with the {\it mode} of $p({\bf v})$ at appropriate steps of the training procedure. However, $p({\bf v})$ is very cumbersome to compute (see Eq.~(\ref{eq:pv})), thereby adding a considerable computational burden. Instead, we sample the {\it mode} of $p({\bf v},{\bf h})$, the model distribution of the RBM. 

The rationale for replacing the mode, ${\bf v}^+$, of $p({\bf v})$ with the visible configuration of the mode, ${\bf v}^*$, of $p({\bf v},{\bf h})$ is 
because the two modes are {\it equivalent} with high probability under scenarios typical for different stages of the RBM pre-training. We prove this 
{\it rigorously} in the appendix, while here we provide numerical evidence of this fact. 

\subsection{Mode Correspondence of Joint and Marginal PMF}\label{modeeq}

\begin{figure}[t!]
	\centering
	\includegraphics[width=0.48\textwidth]{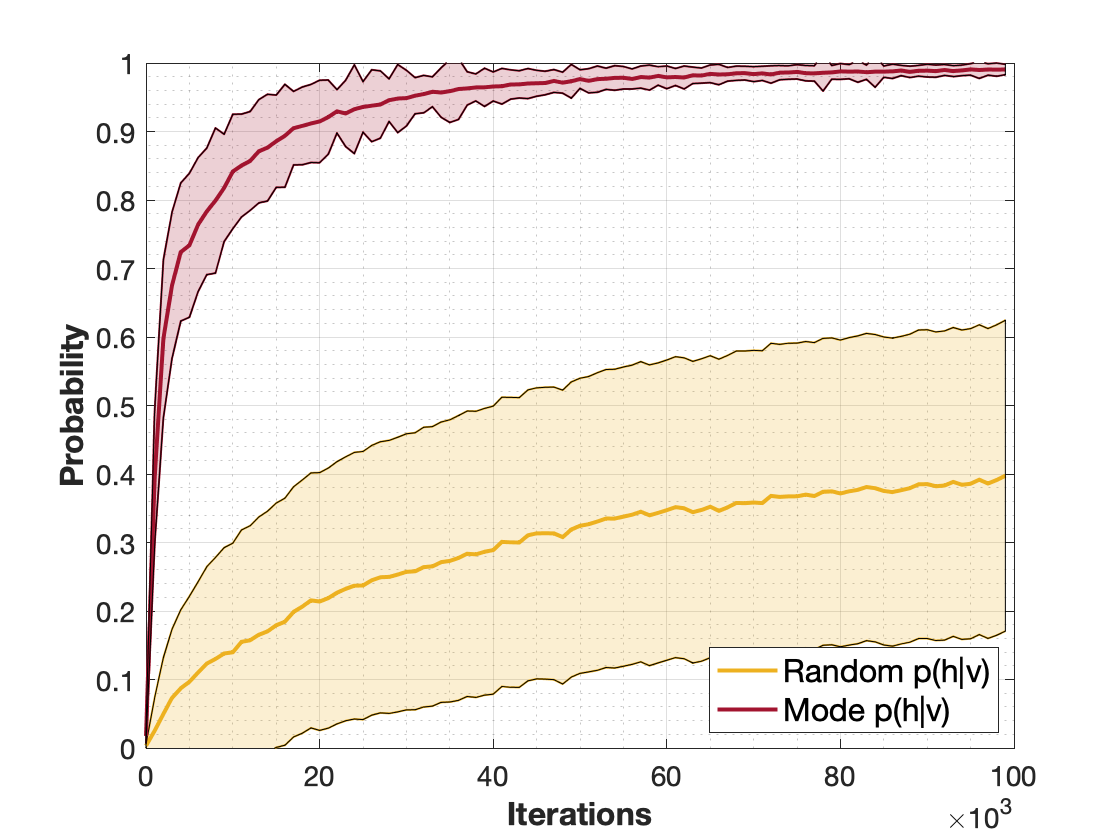}
	\caption{The maximal conditional probability distribution of the hidden layer, $r({\bf v}^+)$, when driven by ${\bf v}^+$, the mode of the marginal PMF, $\pv$, as a function of CD-1 training iterations.  The results are averaged over an ensemble of 200 randomly initialized $15\times 10$ RBMs, with $\pm 1\sigma$ error bars (the shaded regions), on a shifting bar synthetic dataset. The same calculation conditioned on a random visible configuration is plotted as a baseline for comparison.}
	\label{fig:mode_equiv}
\end{figure}

To illustrate the equivalence between the modes of $\pv$ and $\pvh$, let us begin by expressing the joint PMF in terms of the product of the marginal PMF over the visible layer and the conditional PMF over the hidden layer $\pvh = \pv \phvc$. For any given visible configuration ${\bf v}$, we then have $\argh \pvh = \pv \left[\maxh \phvc \right]$. We can then define the hidden ``activation'' of ${\bf v}$ to be
\begin{equation}
r({\bf v}) = \maxh \phvc,
\end{equation}
which allows us to write $\maxh\pvh = \pv\rv$. Note that we can interpret $\rv$ as a measure of the ``certainty" that the hidden nodes acquire the value $0$ or $1$. 

It is then clear that we can write the probability amplitude of the mode of the joint PMF as $\maxvh \pvh = \maxv (\maxh \pvh) = \maxv(\pv\rv) \leq \maxv(\pv) = p({\bf v}^+)$, where we have used the fact that $\rv \leq 1$ and ${\bf v}^+$ is the mode of the marginal PMF, $\pv$. If $r({\bf v}^+)=1$ then 
we have modal equivalence of the joint and marginal PMFs. 

In Fig.~\ref{fig:mode_equiv}, we plot the evolution of $r({\bf v}^+)$ as a function of the number of CD-1 training iterations for a shifting bar synthetic 
dataset, which is small enough that we can compute the exact mode of $\pv$ at any iteration. The figure indeed shows that $r({\bf v}^+)$ approaches $1$ rather quickly as pre-training proceeds. The activation of a random visible configuration is being used as comparison. 

In the appendix we also prove that the condition of $r({\bf v}^+)$ being close to 1 is not necessary for establishing modal equivalence. In fact, we prove that it is still possible for the two modes to be equal even when the weights are small (thus a smaller $r({\bf v}^+)$ value). Additionally, we show in the 
appendix that mode training is more effective in exploring the PMF of the model distribution for RBM instances of greater frustration. The latter is a measure of the degeneracy of the low-energy states of an RBM, and thus the difficulty of finding the ground state configuration. Since it was shown that the frustration of the RBM increases as pre-training proceeds~\cite{rbm_loops}, in order to effectively utilize the power of mode training, the frequency of mode updates should be higher at the later stages of the training than the earlier stages.

\subsection{Optimal Mode-Training Schedule}

The results from the previous subsection then suggest a schedule for the mode training routine that performs mode updates more frequently the longer the pre-training routine has elapsed. 

To realize this, we use a sigmoid, $\sigma$, to calculate the probability of replacing the data driven hidden CD term with a mode driven term at the iteration step $n$
\begin{equation}
\label{eq:p_mode}
P_{\text{mode}}(n) = P_{\text{max}}\sigma(\alpha n + \beta).
\end{equation}
Here, $0 < P_{\text{max}}\leq 1$ is the maximum probability of employing a mode update, and $\alpha$ and $\beta$ are parameters that control how the mode updates are introduced into the pre-training. They are chosen such that the frequency of mode updates becomes dominant only when both the conditions of large weights and frustration are met (see Sec.~\ref{Results} for the value of these parameters). Initially, $P_\text{mode}$ will be small, since the joint- and marginal-distribution modes are unequal, and gradually rises to its maximal value when the modes are of equal magnitude. 
Note that one may employ different functions to quantify the degree to which the joint- and marginal-distribution modes equalize during training. However, we have found that the sigmoid works well enough in practice.

\subsection{Combining MCMC with mode updates}

We are now ready to outline the full procedure of mode training, that combines a MCMC method with the mode updates following the schedule~(\ref{eq:p_mode}). Although one may choose any variation of the MCMC method to train RBMs, for definiteness of discussion, we consider here the standard training method, CD~\cite{hinton2002training}.  In this case, weight updates follow the modified KL$(q||p)$ gradient. As discussed in Section~\ref{Train-RBM}, it evaluates to a difference of two expectations called the data term and model term which we can write as 

\begin{equation}
\label{eq:cd_update}
\Delta w^{\text{CD}}_{ij} = \epsilon^\text{CD}\left[\langle v_i h_j \rangle_{q({\bf v}) p({\bf h} | {\bf v})} - \langle v_i h_j \rangle_{p^k({\bf v}, {\bf h})}\right],
\end{equation}
where $\epsilon^\text{CD}$ is the CD update learning rate, and the expectation in the second term is taken over the reconstructed distribution over a Markov chain initialized from the dataset after $k$ Gibbs samples ($k=1$ in most cases).
When driving the weights with samples of the RBM ground state with the schedule~(\ref{eq:p_mode}), we use instead the following update,
\begin{equation}
\label{eq:mode_update}
\Delta w^{\text{mode}}_{ij} = \epsilon^\text{mode}\left[\langle v_i h_j \rangle_{q({\bf v}) p({\bf h} | {\bf v})} -  [v_i h_j]_{p({\bf v}, {\bf h})} \right],
\end{equation}
where $[\,]_p$ is the mode of the joint RBM model distribution. Note that the mode update learning rate, $\epsilon^\text{mode}$, may be different from the CD learning rate, $\epsilon^\text{CD}$. 

We also stress that the updates in Eq.~(\ref{eq:mode_update}) are in an {\it off-gradient} direction. As we show now, this is the reason for the increased stability of the training over MCMC approaches, and its convergence to arbitrarily small KL divergences. 

\subsection{Stability and Convergence}

The data term, which is identical in both Eq.~(\ref{eq:cd_update}) and Eq.~(\ref{eq:mode_update}), tends to {\it increase} the weights associated with the visible node configurations in the dataset, thereby increasing their relative probabilities compared to states not in the support set, ${\bf v} \in \Omega \backslash \mathcal{D}$.
Instead, the model term {\it decreases} the weights/probability corresponding to highly-probable model states.
CD does this poorly and often diverges, while mode training achieves this with better stability and faster convergence (see Fig~\ref{fig:mode-vs-cd}). We provide here an intuitive explanation of this phenomenon, while a formal treatment on this topic will be provided in the appendix. 
\begin{center}
	\begin{figure*}[t]
		\centering
		\includegraphics[width=0.8\textwidth]{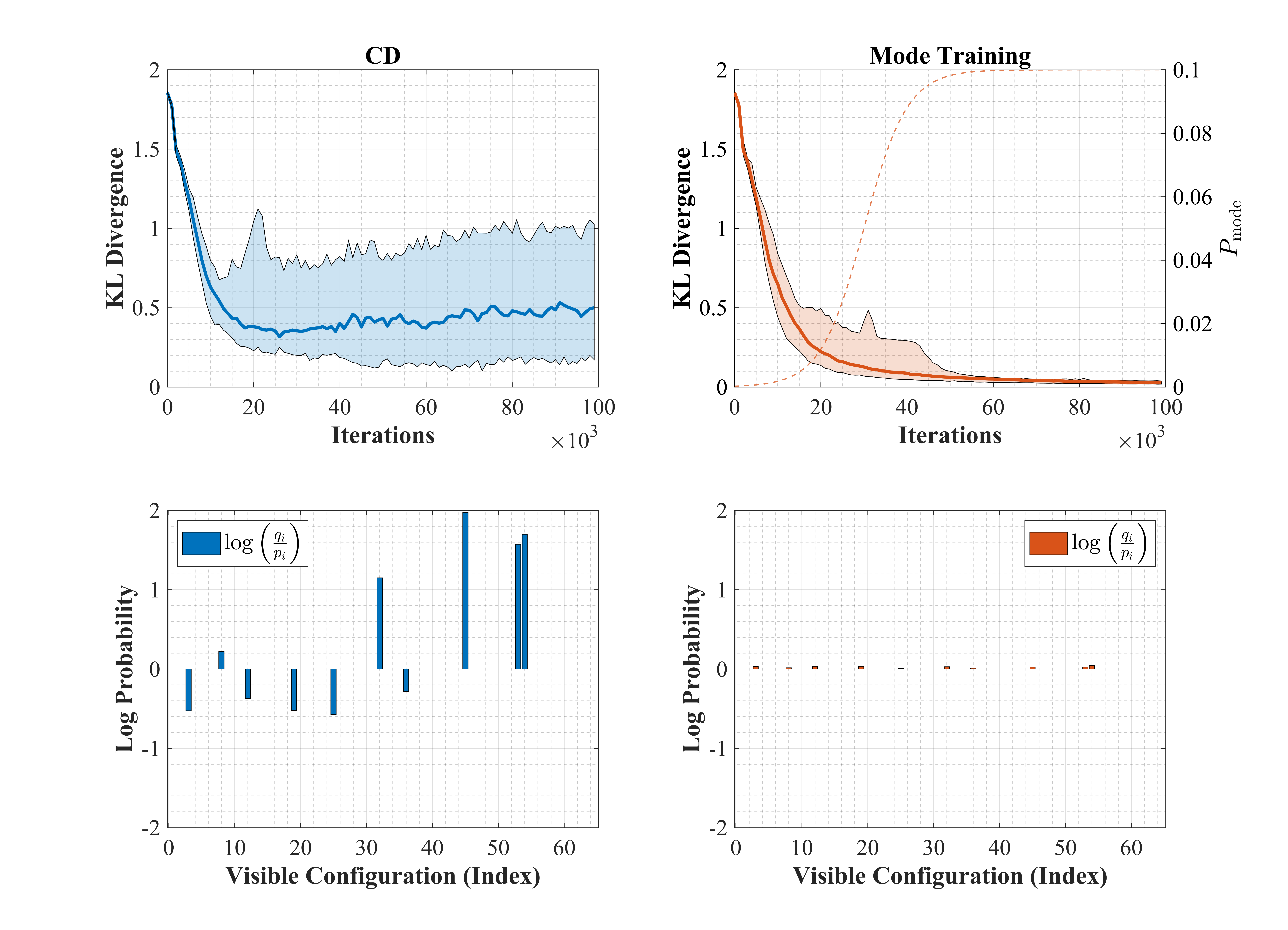}
		\caption{Median KL divergences +max/-min KL (top row) and converged logarithmic differences between model and data distributions (bottom row) of 25 randomly generated $6 \times 6$ RBM on a random uniform support set of size $n_d = 10$ for CD-1 (left column) and mode training (right column). In both cases, the learning rate was a constant $\epsilon^\text{CD} = 0.05$ for 100,000 iterations. The mode sampling probability, $P_\text{mode}$, is plotted as the dotted line in the top right.}\label{fig:mode-vs-cd}
	\end{figure*}
\end{center}

The pre-training routine can be broken down in two phases. In the first phase, the training procedure attempts to discover the support $\mathcal{D}$ of the data distribution $q({\bf v})$. We call this phase the {\it discovery} phase. To better see this, consider a randomly initialized RBM with small weights. These small and uncorrelated weights give rise to RBM energies close to zero for all nodal states, or $E({\bf v},{\bf h}) \approx 0$ for all ${\bf v}$ and ${\bf h}$, see Eq.~(\ref{eq:E}). This results in the model distribution $p({\bf v},{\bf h})$ being almost uniform. 

Therefore, we see that in the discovery phase of training, the model term plays little role in the training as it simply pushes down on the weights in a practically uniform manner, with $\langle v_i h_j \rangle_\text{M} \approx 0.25$. On the other hand, the data term drives the initial phase of the training by increasing the marginal probability of the visible states in the support, ${\bf v} \in \mathcal{D}$. We can then employ a large learning rate (say, $\epsilon^{\text{CD}} = 1$) in the beginning of the training, driving the visible layer configurations in the dataset, $\mathcal{D}$, to high probability versus configurations outside the support. Empirically, we find that CD training performs in the discovery phase reasonably well, and is quickly able to ``find" the visible states in the support.

Now, having discovered the support, we arrive at the second phase of the training where we have to bring the model distribution as close to uniformity as possible over the support in order to minimize the KL-divergence. We call this phase the {\it matching} phase of the training, where we bring the model distribution as close to the data distribution as possible. CD usually performs poorly in this phase (see Fig.~\ref{fig:mode-vs-cd}). To see this most directly, we simply have to consider a visible state with a slightly larger probability than the other states. It should then be necessary for the model term to locate and ``push down" on this state to increase the uniformity of the distribution over the support. However, for any CD approximation of the model term, this rarely happens in a timely manner as the mixing rate of the MCMC chain is far too slow to locate this state before the training diverges. 

This is where samples of the mode are most effective, and can assist in the correction of the states' amplitudes. As we have discussed in Sec.~\ref{modeeq}, finding the modal state, ${\bf v}^*$, of the model distribution, $\pvh$ allows us to immediately locate the mode, ${\bf v}^+$, of the marginal probability, $\pv$, and ``push" down on this state through an iteration of weight updates. This ``push" may result in another state ``popping" up and becoming the new modal state; however, often times the probability amplitude of this new state will be less than that of the previous mode (see also the appendix). This results in a training routine that ``cycles" through any dominant state that emerges at each iteration, and the probability amplitude of the mode decreases as training proceeds until the probability amplitudes of all the states in the support become equal (see the formal demonstration of this in the appendix), which results in the desired uniform distribution over the support. This can be visualized as a ``seesaw" between the dominant states, with the oscillation amplitude of this seesaw decaying to zero in time.

We outline the pseudo-code for mode training in~Algorithm \ref{mode_algorithm} and a visual depiction of the training side by side with CD-1 is shown in Fig.~\ref{fig:mode-vs-cd}. 

As it should now be clearer, these mode-driven updates are {\it deviations} from the gradient direction, since in general the mode over the model distribution is different from the expected value. This makes the mode-training algorithm, which mixes mode driven samples and data-driven ones, {\it distinct} from gradient descent. This is also supported by the fact that our method tends toward a particular class of distributions (uniform), when gradient descent would settle in some local minima or saddle points in the KL landscape.

\begin{algorithm}[H]
	\caption{Unsupervised learning of an RBM with the mode-training algorithm}\label{mode_algorithm}
	\begin{algorithmic}[1]
		\Procedure{MT}{$P_{\text{max}}, \alpha, \beta, \{\epsilon^{\text{CD}}_n\}_{n=1}^N, N$}
		\State $\theta_0 \sim \mathcal{N}(0,0.01)$
		\For{$i = 1; i \leq N; i$++}
		\State $p_\text{mode} \gets P_\text{max}\sigma(\alpha i + \beta)$
		\State Sample $u \sim \text{U}[0,1]$
		\If{$u \leq p_\text{mode}$}
		\State ${\bf v}^*, {\bf h}^*, E_0 \gets \text{argmin} E({\bf v}, {\bf h})$
		\State $\gamma \gets \frac{-E_0}{(n + 1)(m + 1)}$
		\State $\theta_i \gets \theta_{i-1}  + \gamma \epsilon^\text{CD}_i \Delta \theta^\text{mode}$  \Comment{Eq.~\ref{eq:mode_update}}
		\Else
		\State $\theta_i \gets \theta_{i-1} + \epsilon^\text{CD}_i \Delta \theta^\text{CD}$ \Comment{Eq.~\ref{eq:cd_update}}
		\EndIf
		\EndFor
		\State \textbf{return} $\theta_N$
		\EndProcedure
	\end{algorithmic}
\end{algorithm}

The free parameters in this method are the schedules of the mode sample using $P_\text{mode}(n)$ (defined by $P_{\text{max}}$, $\alpha$ and $\beta$ 
in Eq.~(\ref{eq:p_mode})) and the CD learning rate, $\epsilon^\text{CD}$. With $\epsilon^\text{CD}$ fixed, we set $\epsilon^\text{mode} = \gamma \epsilon^\text{CD}$, where $\gamma = -E_0/[(n + 1)(m + 1)]$, with $E_0 (<0)$ being the ground state of the corresponding RBM with nodal values $\{-1,1\}^{n+m}$. 
This particular choice of $\gamma$ is an upper bound to the learning rate which minimizes the RBM energy variance over all states (see the appendix for the proof of this statement). 

Finally, we find that the mode training method is not very sensitive to the parameters chosen. In fact, as long as the mode samples are incorporated after the joint and marginal mode equilibration, the training is stabilized and the learned distribution will tend to uniformity (see also the appendix). This result reinforces the intuitive notion that the pushes on the mode provide a stabilizing quality to the training over CD (or any other MCMC approach), which can otherwise diverge when mixing rates grow too large at later times during training. 

\subsection{Importance of Representability}
Note that since mode training is driven to distributions of a particular form, instead of local minima as in the case of CD or other gradient approaches, the representability of the RBM becomes important. The ability of a RBM to represent a given data distribution is given by the amount of hidden nodes, where one is guaranteed universal representability with $n_d +1$ hidden nodes~\cite{le2008representational}. In other words, one more hidden node than the number of visible configurations with non-zero probability is sufficient (but perhaps not necessary) for general representability. In practice, this bound is found to be very conservative and typically much fewer nodes are needed for a reasonable solution. 

Representability can become an issue in mode training when the parameter space of the RBM does not include the uniform distribution over the support (or a reasonable approximation). Since the mode training is generally in a non-gradient direction, this means that it may settle to a worse solution than a local optimum obtainable by CD. This is a signal that more hidden nodes are required for an optimal solution. 

Since most natural datasets live on a very small dimensional manifold of the total visible phase space, $|n_d|/|\Omega| \ll 1$, the amount of hidden nodes required typically scales polynomially with the size of the problem, versus the exponential scaling of the visible phase space. This makes representability not an insurmountable problem for mode training, even in full size problems. To this end, the examples of Fig.~\ref{fig:mode-vs-cd} and Fig.~\ref{fig:synthetic_results} show that mode training does not necessarily fail if the number of hidden nodes is less than that needed to guarantee representability. 

\section{Results}\label{Results}
As examples of our method, we have computed the log-likelihoods achieved with mode training across two synthetic and one realistic (MNIST) datasets, and compared the results against the {\it best} achieved log-likelihoods with CD-1, PCD-1 and PT on standard RBMs, E-RBMs, and C-RBMs ~\cite{melchior2016center}. 
For the small synthetic datasets we could also compute the {\it exact} log-likelihoods, thus providing an even stronger comparison. For the larger MNIST case, mode sampling was done via simulation of a digital memcomputing machine based on Ref.~\onlinecite{bearden2019critical}. The specific details of our implementation can be found in the appendix. 

For synthetic data, we use the commonly employed {\it binary shifting bar} and {\it bars and stripes} datasets, both described in Ref.~\onlinecite{mackay2003information}. The former is defined by two parameters: the total length of the vector, $L$, and the amount of consecutive elements (with periodic boundary conditions), $B < L$, set to one, with the rest set to zero. This results in $L$ unique elements in the dataset with uniform probability, giving a maximum likelihood of $L \log(1/L)$. The {\it inverted} shifting bar set is obtained by swapping ones and zeros. The bars and stripes dataset is constructed by setting each row of a $D\times D$ binary pattern to one with probability 1/2, and then rotating the resulting pattern $90^\circ$ with probability 1/2. This produces $2^{D+1}$ elements, with the all-zero and all-one patterns being twice as likely as the others.

For a direct comparison to previous work, we followed the same setup as Ref.~\onlinecite{melchior2016center}. A 9$\times$4 RBM was tested on a shifting bar 
dataset with $L = 9$, $B = 1$ and a $D = 3$ bars and stripes dataset. Both synthetic sets were trained for 50,000 parameter updates, with no mini-batching, and a constant $\epsilon^{\text{CD}} = 0.2$. For the MNIST dataset, a $784 \times 16$ sized model was trained for 100 epochs, with batch sizes of 100. The mode samples in both cases are slowly incorporated into training in a probabilistic way following Eq.~(\ref{eq:p_mode}), initially with $P_{\textrm{mode}} = 0$ and driven to $P_{\textrm{max}} = 0.1$ for the shifting bar and MNIST datasets, and $P_{\textrm{max}} = 0.05$ for the bars and stripes dataset. In both cases, we chose $\alpha = 20/N$ and $\beta = -6$, where $N$ is the total number of parameter updates. 

\begin{figure}[h]
  \centering
    \includegraphics[width=0.4\textwidth]{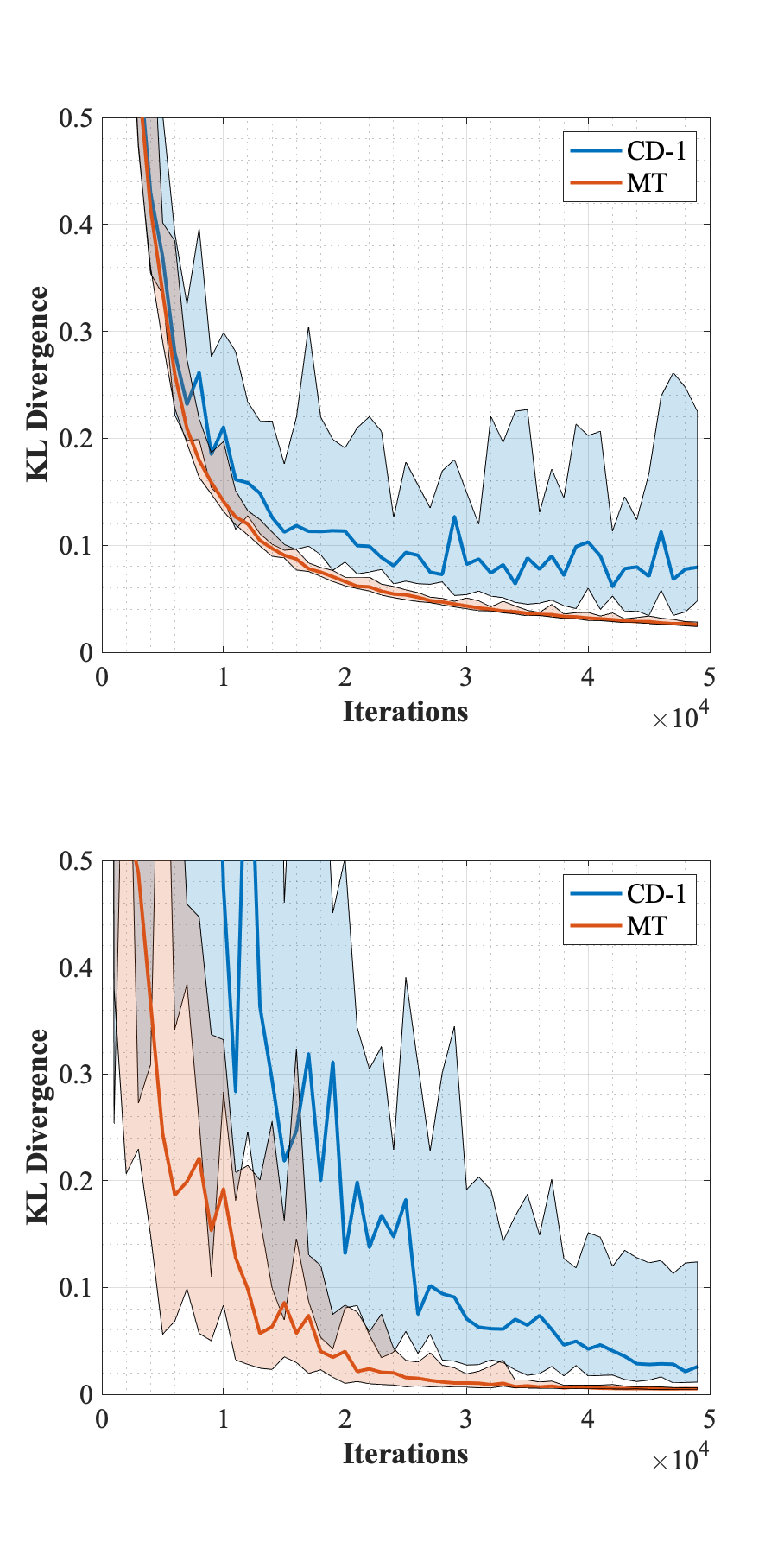}
      \caption{KL divergences achieved on the binary shifting bar dataset across 25 randomly initialized 14$\times$10 RBMs for both CD-1 and mode training (MT). In addition, every time a mode sample is taken, CD is allowed to run with $k = 720$, a number scaled to the equivalent computational cost of taking a mode sample (see text and appendix). The bold line represents the median KL divergence across the runs, and the max/min KL divergences achieved at that training iteration define the shaded area. The plot in the top panel is with a small CD learning rate, $\epsilon^{\textrm CD} = 0.05$. The plot in the bottom panel is with an exponentially decaying $\epsilon^{\text{CD}}(n) = e^{-cn}$ with $c=4$ and $n\in [0,1]$ being the fraction of completed training iterations.}
\label{fig:synthetic_results}
\end{figure}

We plot an example of training progress in a moderately large synthetic problem in Fig.~\ref{fig:synthetic_results}. Reported is the KL divergence (which differs from the log-likelihood by a constant factor independent of the RBM parameters~\cite{goodfellow2016deep}) of a slightly bigger $14 \times 10$ RBM as a function of number of parameter updates on a $L=14$, $B = 7$ shifting bar set, for both CD-1 and mode training. We consider two learning rate schedules, constant $(\epsilon^\text{CD} = 0.05)$ and exponential decay $(\epsilon^\text{CD}(n) = e^{-cn}, c = 4, n \in [0,1] \text{, the fraction of completed training iterations})$. Additionally, every time a mode sample is taken, CD is allowed to run with $k = 720$, a number scaled to the equivalent computational cost of taking a mode sample. The details of the computational equivalence between a mode sample using memcomputing and iterations of CD are discussed in greater detail in the appendix. 
In both cases, even when computational cost is factored in, mode training leads to better solutions and proceeds in a much more stable way across runs (lower KL variance at convergence). Importantly, mode training {\it never} diverges while CD oftentimes does. Following our intuition about mode training established in Sec.~\ref{Train-Mode}, using larger learning rates in the CD-dominated phase accelerates the convergence of mode training.  

It is known that using CD to train RBMs can result in poor models for the data distribution ~\cite{hinton2010practical}, for which PCD and PT are recommended. We note that for the mode training employed in this paper, CD-1 was employed as the gradient approximation (except in the case for MNIST where PCD-1 was used). Impressively, in all cases tested, the mode samples were able to stabilize the CD algorithm sufficiently to overcome the other, more involved approximations (PT) and model enhancements (centering).

\begin{table}[ht!]
\begin{tabular}{|c||c|c|c|c||}
\hline
 & S. Bar &Inv. S. Bar & Bars \& Stripes & MNIST\\ \hline \hline
 CD-1& -20.42 & -20.73&-61.08 & -152.42\\ \hline
PCD-1& -21.71 & -21.64& -57.01 & -140.43 \\ \hline
 PT &   -20.57& -20.57& -51.99& -142.00\\ \hline
 $\text{MT}$ & {\bf -19.85} & {\bf-19.86} & {\bf -50.79}({\bf -41.82}) & -{\bf 136.42}\\ \hline
 Exact & -19.77 & -19.77 & -41.59 & -- \\ \hline
\end{tabular}
\caption{Comparison between the best log-likelihoods achieved over 50,000 gradient updates on a $9 \times 4$ RBM across various RBM types (standard, E-RBM, C-RBM) and training techniques (CD-1, PCD-1, PT) as reported in Ref.~\onlinecite{melchior2016center} compared with mode training (MT) on a standard RBM. For each technique, the best achieved log-likelihood score across 25 runs is reported. In parenthesis are results for a $9\times 9$ RBM. For these small 
	datasets we can also compare with the exact result. For MNIST, networks trained had 16 hidden nodes and PCD-1 was used as the gradient update, and average log-likelihood is reported.}\label{tablemel2016}
\end{table}

In addition, it is clear that mode training exhibits several desirable properties over CD (or other gradient approaches). Most significantly, it seems to perform better with larger learning rates during the gradient dominated phase, and smaller learning rates when using mode samples. CD and other gradient methods generally perform better with smaller learning rates, as their approximation to the exact gradient gets better. Irrespective, even in this regime, the mode training eventually drives the system to the uniform solution compared to the local optimum of CD. The main advantage is that with mode training, one can (and often should) use larger learning rates, resulting in fewer required iterations. 

For further comparison, we report in Table~\ref{tablemel2016} results for the shifting and inverted bar, bars and stripes, and MNIST datasets obtained with mode training 
and those reported in Ref.~\onlinecite{melchior2016center}. The results show mode training with a standard RBM always converges to models with log-likelihoods higher than E-RBMs, and C-RBMs trained with CD-1, PCD-1, or PT. Furthermore, the mode training log-likelihood increases with an increasing number of hidden nodes (better representability). Empirically, we also find the incredible result that with sufficient representability and the proper learning rate, mode training can find solutions {\it arbitrarily close} to the exact distribution. 

\section{Conclusions}\label{conclusions}
In this paper we have introduced an {\it unsupervised non-gradient} training method that stabilizes gradient based methods by utilizing samples of the ground state of the RBM, and is empirically seen to get as close as desired to a given uniform data distribution. It relies on the realization that as training proceeds, the RBM becomes increasingly frustrated, leading to the modes of the visible layer distribution and joint model distribution becoming effectively equal. As a consequence, by using the mode (or ground state) of the RBM during training, our approach is able to ``flatten'' all modes in the model distribution that do not correspond to modes in the data distribution, reaching a steady state only when all modes are of equal magnitude. In this sense, the ground state of the RBM can be thought of as `supervising' the gradient approximation during training, preventing any pathological evolution of the model distribution. 

Our results are valid if the representability of the RBM is enough to include good approximations of the data distribution. Once the representability is sufficient, a properly annealed learning-rate schedule will take the KL divergence as low as desired. Increasing the number of hidden nodes increases the non-convexity of the KL-divergence landscape, easily trapping standard algorithms in sub-optimal states. In practice, after some point, increasing the number of hidden nodes will not decrease the KL divergence that a pre-training procedure actually converges to, as the trade-off between effective gradient update and representation quality is reached. We here claim that this point of tradeoff for our mode-assisted procedure is reached at far greater number of nodes than standard procedures, thus allowing us to find representations with far smaller KL-divergence. The mode training we suggest then provides an extremely powerful tool for unsupervised learning, which {\it i)} prevents a divergence of the model, {\it ii)} promotes a more stable learning, and {\it iii)} for data distributions uniform across some support, can lead to solutions with arbitrary high quality. 

To scale our approach, one would need an efficient way to sample low-energy configurations of the RBM, a difficult optimization problem on its own. This is equivalent to a weighted MAX-SAT problem, for which there are several industrial-scale solvers available. Also, the recent 
successes of memcomputing on these kind of energy landscapes in large cases (million of variables) are fodder for optimism~\cite{traversa2018evidence,sheldon2018taming}. 

Finally, fitting general discrete distributions (with modes of different height) with this technique 
seems also within reach. In this respect, we can point to our results on the bars and stripes dataset (a non-uniform $q({\bf v})$) for inspiration. 
We have found the best log-likelihood on that set with mode training with a {\it lower} frequency of the mode sampling, $P_\text{max}= 0.1 \to 0.05$, compared to the shifting bar (a uniform set). This suggests that a general update, which properly weighs the mode sample in combination with the dataset samples, may extend this technique to general non-uniform probabilities, with the weight analogous to a tunable demand for uniformity.

Our method is useful from a number of perspectives. First, from the unsupervised learning point of view, it opens the door to the training of RBMs with unprecedented accuracy in a novel, {\it non-gradient} approach. Second, many unsupervised models are used as `feature learners' in a downstream supervised training task (e.g., classification), where the unsupervised learning is referred to as pre-training. We suspect that starting backpropagation from an initial condition obtained through mode training would be highly advantageous. Third, the mode training we suggest can be done on models with any kind of pairwise connectivity, which include deep, convolutional, and fully-connected Boltzmann machines. We leave the analysis of these types of networks for future work. \\
 

\noindent {\bf Acknowledgments}\\
Work supported by DARPA under grant No. HR00111990069. H.M. acknowledges support from a DoD-SMART fellowship. M.D. and Y.R.P. acknowledge partial 
support from the Center for Memory and Recording Research at the University of California, San Diego. All memcomputing simulations reported in this paper have been done on a single core of an AMD EPYC server. \\




\bibliographystyle{apsrev4-1}
\bibliography{SUSYref}

\onecolumngrid
\newpage

\appendix
\title{Supplementary Material: Mode-Assisted Unsupervised Learning of Restricted Boltzmann Machines}

\author{Haik Manukian}
\thanks{Equal contribution.}
\affiliation{Department of Physics, University of California, San Diego, La Jolla, CA 92093}

\author{Yan Ru Pei}
\thanks{Equal contribution.}
\affiliation{Department of Physics, University of California, San Diego, La Jolla, CA 92093}

\author{Sean R.B. Bearden}
\affiliation{Department of Physics, University of California, San Diego, La Jolla, CA 92093}

\author{Massimiliano Di Ventra}
\affiliation{Department of Physics, University of California, San Diego, La Jolla, CA 92093}

\maketitle

\section{Sampling with Memcomputing}
\label{mem}

The mode-training method introduced in the main text requires sampling the mode of the model distribution of a given RBM. This task can be transformed to sampling the optimum of an equivalent weighted, mixed maximum satisfiability (MAX-2-SAT) optimization problem~\cite{manukian2019accelerating}.  
To obtain high-quality samples for large models, we employ the memcomputing approach~\cite{13_memcomputing,DMM2,DMMperspective}, a novel computing paradigm that employs memory to both store and process information. 

\subsection{Memory Dynamics}

Our implementation is based on the approach used in Ref.~\onlinecite{bearden2019critical} for the satisfiability (SAT) problem, appropriately modified for the MAX-2-SAT optimization problem. For a MAX-2-SAT with $N$ variables, $M_1$ 1-SAT clauses, and $M_2$ 2-SAT clauses we have $i \in [[1,N]]$ and $m \in [[1,M_2]]$. In this case, the equations used to simulate a digital memcomputing machine read 
\begin{align}
\label{eq:voltages}
\dot{v}_i&= b_i + \sum_m \Big\{ W_{2,m} x^f_m x^s_m G_m^i + \rho(1-x^f_m)R_m^i \Big\} \\
\label{eq:xfast}
\dot{x}^f_m &= \beta (x^f_m + \epsilon)(C_m - \frac{1}{4}),\\
\label{eq:xslow}
\dot{x}^s_m &= \alpha (1+W_{2,m}) C_m.
\end{align}
The voltages, $v_i \in [-1,1]$, are continuous representations of the $N$ Boolean variables of the problem, $y_i$, with a false assignment represented as $v_i < 0$, a true assignment represented as $v_i > 0$, and $v_i = 0$ is ambiguous. Rather than thresholding the voltages to check the clause states, we use the clause function directly. A 2-SAT clause in Boolean form is comprised of two literals, $\{l_{i,m}, l_{j,m}\}$, where a literal in the $m$-th clause, $l_{i,m}$, is either a negated, $\bar{y_i}$, or unnegated, $y_i$, variable. The Boolean clause is represented as a continuous clause function, 
\begin{equation}
C_m(v_i, v_j) = \frac{1}{2}\textrm{min}[(1 - q_{m,i} v_i),(1 - q_{m,j} v_j)].
\end{equation}
The factor $q_{m,i}$ contains the information about the relation between the literal in the $m$-th clause, $l_{i,m}$, and its associated variable, $y_i$; it evaluates to $+1$ if $l_{i,m}=v_i$, and $-1$ if $l_{i,m}=\bar{v_i}$. The function is bounded, $C_m \in [0,1]$, and we consider a clause to be satisfied when $C_m(v_i, v_j) < 0.5$. By thresholding the clause function we also avoid the ambiguity associated with $v_i = 0$.

Each clause has a ``fast'', $x^f_m$, and a ``slow'', $x^s_m$, memory variable that serve as indicators of the history of the state of $C_m(v_i, v_j)$. The memory is ``fast'' in the sense that it contains information of the {\it recent} history of $C_m$, and ``slow'' in the sense that it contains information on the {\it entire} history of $C_m$. Both memory variables are bounded,  $x^f_m \in [0, 1]$ and $x^s_m \in [1, 10*M_2]$. The offset $\epsilon=10^{-3}$ in Eq.~(\ref{eq:xfast}) is used to remove spurious steady-state solutions. \\

The gradient-like term in Eq.~(\ref{eq:voltages}) is $G_m^i=0$ if variable $y_i$ is not associated with any literal in clause $m$. Otherwise,
\begin{equation}
	G_m^i = q_{m,i} \frac{1}{2}(1 - q_{m,j}v_j),
\end{equation}
where $v_j$ is the value of the voltage corresponding to the other literal in the clause. The ``rigidity'' term in Eq.~(\ref{eq:voltages}) is
\begin{equation}
R_m^i = \\
\begin{cases}
q_{m,i} \frac{1}{2}(1 - q_{m,i} v_i), & C_m(v_i,v_j)=\frac{1}{2}(1 - q_{m,i} v_i)\\
0, & C_m(v_i,v_j) = \frac{1}{2}(1 - q_{m,j} v_j).
\end{cases}
\end{equation}
This term only influences the voltage that is closest to the satisfying assignment in the clause. \\

The weight of each 2-SAT clause, $W_{2,m}$, is incorporated in the dynamics of the slow memory variable and the dynamics of voltages. The weights of the 1-SAT clauses are used to bias the voltage dynamics in Eq.~(\ref{eq:voltages}) as $b_i = (W_{1,i} - W_{1,\bar{i}})/2$, where $W_{1,i}$ is the weight of the 1-SAT with a literal that is equivalent to variable $y_i$ and $W_{1,\bar{i}}$ is the weight of the 1-SAT with a literal that is the negation of variable $y_i$. The weight is zero if no corresponding 1-SAT exists. \\
 
The parameter values used for the simulations reported in the main text are $\alpha = 10$, $\beta = 0.1$, $\rho = 0.1$. At $t=0$, voltages are randomly initialized with $x^f_m=0$ and $x^s_m=1+W_{2,m}$. The equations are then numerically integrated with the forward Euler method using an adaptive time step, $\Delta t \in [2^{-5}, 2^{-1}]$, until a total integration time of $t = 500$ is reached. Then, we take the configuration with the lowest number of unsatisfied clauses as the sample.

\subsection{Computational Cost of Sampling with Dynamics}

For simplicity, let us assume the form of an $N\times N$ RBM, resulting in $2N$ voltage variables and $O(N^2)$ clauses as a MAX-2-SAT instance. The time 
complexity of a forward Euler integration step is dominated by the sparse matrix-vector multiplication of a $2N \times N^2$ sparse matrix. Since each node  connects to $N$ other nodes this matrix contains $N^2$ non-zero elements encoding the connectivity structure of the problem. This matrix multiplies the gradient vector of length $N^2$, for a total of $O(N^2)$ floating point operations per second per time step. If the maximum number of timesteps is independent of system size, the total time complexity is then $O(N^2)$. Memory complexity also scales as $O(N^2)$, since the algorithm requires the storage of 4 length $N^2$ floating point elements, and a $2N \times N^2$ sparse matrix with $N^2$ non-zero elements. 

\subsection{Performance Comparison Between CD and Memcomputing}

Here, we want to show that the RBM energy sampled by the memcomputing approach is consistently better than the one found by CD independently of the size of the RBM. Even though memcomputers are ideally realized as physical devices in hardware~\cite{13_memcomputing,DMM2}, here we compare their performance as {\it numerically integrated} dynamical systems versus traditional algorithmic methods (e.g., CD). 

We then first compute an ``exchange rate'' between one iteration of the numerical integration and $k$ steps of CD, such that the resulting computational complexity (i.e., wall time on the same processor) will be essentially identical. We discover empirically, across a large range of system sizes, that this exchange rate is about 30 steps of CD per iteration of the dynamics described by Eqs.~\ref{eq:voltages},~\ref{eq:xfast}, and~\ref{eq:xslow}. 

We choose as our test problems a set of randomly initialized $N\times N$ RBMs, with all weights sampled from a normal distribution with $\mu = 0$ and $\sigma^2 = 0.01$. The system sizes ranged from $N=100$ to 1000, which we chose to be large enough to observe the scaling in time. We then compute the relative energy differences, in percentage, $\Delta \epsilon \% = 100*(E^{Mem} - E^{CD})/E^{CD}$, between the energy $E^{Mem}$ obtained with the memcomputing ODEs described above, as compared to the energy $E^{CD}$ obtained using CD-$k$. For a direct comparison, we have run the memcomputing solver for $N_{tot} = 2N$ integration steps, scaled with the system size. Contrastive divergence was then run using the empirical exchange rate, $k = 30*N_{tot}$, resulting in the same computational cost seen in Fig.~\ref{fig:cd_vd_dynamics} (right panel). 

\begin{figure}[t!]
	\centering
	\includegraphics[width=0.95\textwidth]{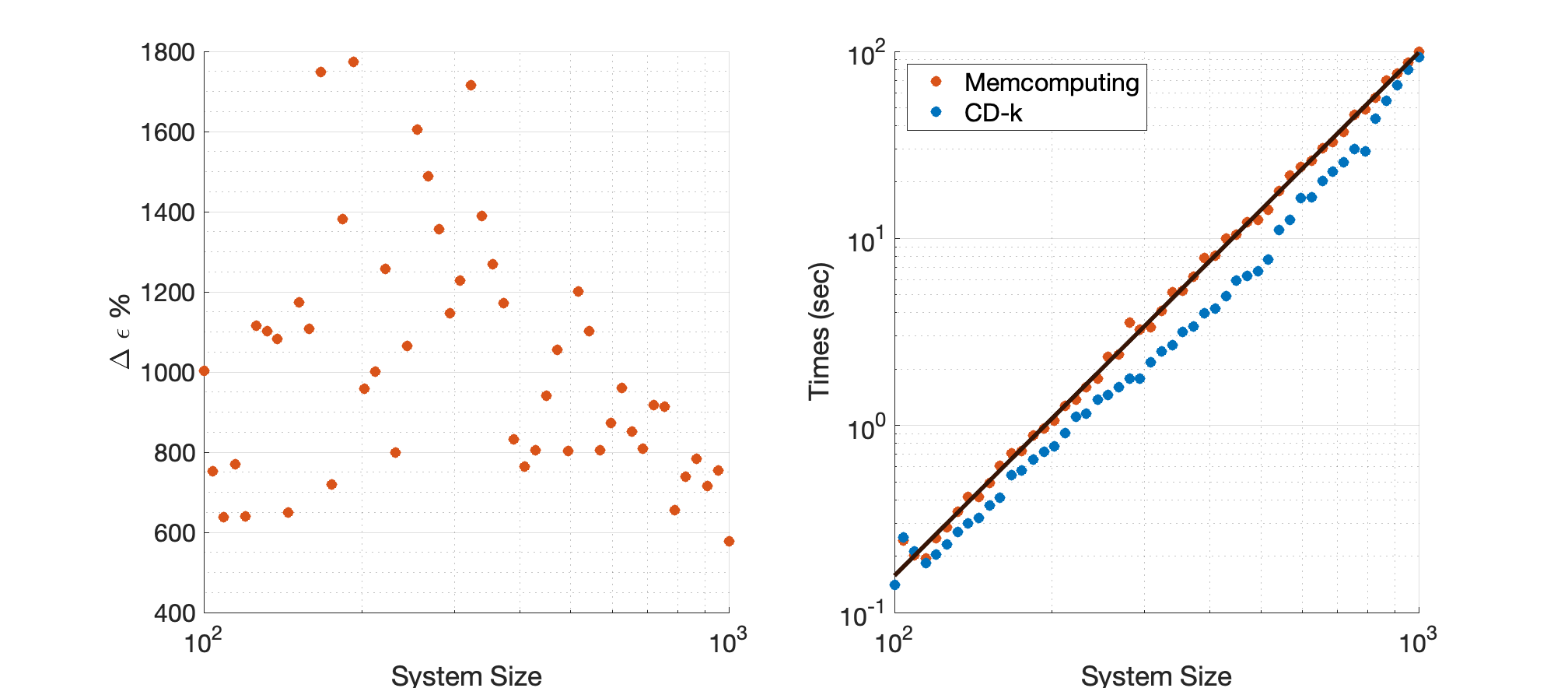}
	\caption{The median relative energy differences, in percentage, $\Delta \epsilon \% = 100*(E^{Mem} - E^{CD})/E^{CD}$ (left panel) between the memcomputing solver and CD-$k$, and respective wall clock times (right panel) from 20 randomly initialized $N\times N$ RBMs, with system size, $N$, ranging from 100 to 1000. A best fit line (slope $\sim 2.8$) of the memcomputing wall clock times is also plotted (dark line). Both calculations have been done on a single core of an AMD EPYC 7401 server.}
	\label{fig:cd_vd_dynamics}
\end{figure}

The energy results are plotted in Fig.~\ref{fig:cd_vd_dynamics} (left panel), where the memcomputing dynamics perform very favorably in terms of energies obtained compared CD-$k$, 
consistently above 400\%, often showing an improvement of more than 1000\%. In terms of time complexity (right plot), both algorithms follow the same linear trend on a log-log plot, indicating a polynomial scaling. Indeed, the best fit asymptotic behavior of both algorithms is almost cubic. This is consistent with our complexity analysis in the last section. Since both algorithms have a leading order scaling of $O(N^2)$ for a {\it fixed number of iterations}, they would scale cubically if we allowed the number of iterations to grow as $N$, the system size.
Finally, we want to stress that the set of equations used in the present work are only an example of how to implement a memcomputing solver and have not been optimized in terms of both speed and performance. 

\section{Stability of Mode-Assisted Training}

The stability of a pre-training procedure to training neural networks is a very desirable feature. This is because the KL divergence cannot be monitored during the pre-training process for a realistically sized RBM, so it is crucial for us to ensure that the KL divergence does not diverge. In this section, we show that using the mode in the model update term will guarantee convergence to a uniform distribution, and there is an optimal learning rate that provides the largest rate of convergence, with the learning rate being easily computable. \\

Note that in this work, the data term of a mode-assisted update is the same as traditional CD algorithms, so the difference is entirely in the way that the model term is approximated. Therefore, we only have to focus on the model term (which is ``approximated" by the mode of the joint distribution), and point out some of its key properties, in particular those pertaining to the stability of the pre-training procedure.

\subsection{Gauging the RBM}
\label{gauge}

For the sake of simplicity, we consider an $n\times m$ unbiased RBM with nodal values of $\mathbf{v}\in \{-1,1\}^n$ and $\mathbf{h}\in \{-1,1\}^m$, then the RBM energy is given by:
\begin{equation*}
E(\mbf{v}, \mbf{h}) = -\sum_{ij}W_{ij}v_ih_j.
\end{equation*}
Note that an RBM with nodal values $\{0,1\}$ can be trivially transformed into one with nodal values $\{-1,1\}$. For the analysis in this appendix, we will always assume (unless specifically mentioned) that an RBM is unbiased and equipped with nodal values $\{-1,1\}$. \\

Since in our work, we are interested in particular to the mode of the joint distribution, which is equivalently the nodal configuration that minimizes the RBM energy, we give a special denotation to this configuration, $\{ \mbf{v}^*, \mbf{h}^* \}$, and name it the {\it ground state} of the RBM energy. 

\begin{definition}[Ground State Energy]
Given an $n\times m$ RBM with weights $\mbf{W}$, we denote the {\bf ground state} of this RBM to be
\begin{equation*}
\{ \mbf{v}^*, \mbf{h}^* \} = \argmin_{\{\mbf{v}, \mbf{h}\}} \big[ E(\mbf{v},\mbf{h}) \big].
\end{equation*}
Furthermore, we denote the {\bf ground state energy} to be
\begin{equation*}
E_0(\mbf{W}) = -\sum_{ij} W_{ij}v_i^*h_j^*.
\end{equation*}
\end{definition}

Note that in practice, the ground state of an RBM can be thought of as being unique. In fact, for randomly initialized weights, the probability of having two or more minimal energy states, or {\it degenerate ground states}, is of measure zero. In theory, if there were to be multiple ground states, we can randomly select one of them to be $\{\mbf{v}^*, \mbf{h}^*\}$, and our analysis will not be affected at all. \\

Note that for any RBM, we can always map it to an equivalent RBM such that the ground state is $\mbf{+1}$. This is called a {\it gauge} operation, which we formally define as follows

\begin{definition}[Gauged RBM]
Given an $n\times m$ RBM with weights $\mbf{W}$ and ground state $\{ \mbf{v}^*, \mbf{h}^* \}$, we define the {\bf gauge} mapping $G: \mbb{R}^{nm} \mapsto \mbb{R}^{nm}$ such that $\mbf{W'} = G(\mbf{W})$ satisfies the following condition:
\begin{equation*}
W'_{ij} = W_{ij}v_i^*h_j^*.
\end{equation*}
Then we call $\mbf{W'}$ a {\bf gauged} RBM.
\end{definition}

\begin{remark}
Note that by this definition, it is easy to see that the ground state of any gauged RBM must be $\mbf{+1}$. This means that the ground state energy of a gauged RBM is simply the sum of its weights
\begin{equation*}
E = -\sum_{ij} W'_{ij}.
\end{equation*}
Furthermore, note that the form of the weight update equation is invariant under conjugation. In other words, if we let $f: \mbb{R}^{nm} \mapsto \mbb{R}^{nm}$ denote one iteration of weight update, then it is clear that
\begin{equation*}
f = G^{-1} \circ f \circ G.
\end{equation*}
This means that the dynamics of $\mbf{W}$ can be analyzed in terms of the dynamics of $\mbf{W}'$. In this section, we will always assume that the RBM is gauged. \\
\end{remark}

For a gauged RBM, the change of the weight elements (under unit learning rate) as a result of an iteration of mode-informed update is:
\begin{equation}
\label{w_up}
\delta W_{ij} = -\braket{v_i h_j}_{mode} = -v_i^* h_j^* = -1.
\end{equation}
Therefore, we see that every weight element is decremented by 1 uniformly across the entire weight matrix, and the energy change of the ground state energy is:
\begin{equation}
\label{dE0}
\delta E_0 = -\sum_{ij}\delta W_{ij} = nm.
\end{equation}

\subsection{Metric}
\label{met}

In order to investigate how the {\it joint probability mass function} (joint PMF), or $\pvh$, evolves under mode training, we have to look at how the energy changes for all nodal states. To do so, it is useful to define a distance measure between two states, to have a sense of how ``far apart" the two states are. We then propose the following distance measure.

\begin{definition}[Metric]
\label{distance}
We define a {\bf spin state} to be an ordered $(n+m)$-tuple given by $\mbf{s} = \{\mbf{v}, \mbf{h}\}$. Given two spin states, $\mathbf{s_1} = \{\mathbf{v_1},\mathbf{h_1}\}$ and $\mathbf{s_2} = \{\mathbf{v_2},\mathbf{h_2}\}$, we let $n_v = \frac{|\mathbf{v_2}-\mathbf{v_1}|^2}{2}$ and $m_h=\frac{|\mathbf{h_2}-\mathbf{h_1}|^2}{2}$, we define the distance to be
\begin{equation}
\label{dis}
d(\mbf{s_1},\mbf{s_2})=\frac{n_v}{n}+\frac{m_h}{m}-2\frac{n_vm_h}{nm}.
\end{equation}
\end{definition}

\begin{remark}
Note that $n_v$ simply counts the number of visible nodes that are different between the two states, and $m_h$ counts the number of different hidden nodes that are different. Note that the space of $\mbf{s}$ with this distance definition is a pseudometric space, in the sense that it is possible for the distance between two distinct points to be zero, in particular states that are related by $\mbb{Z}_2$ symmetry (or global spin flips). This can be easily verified by letting $\mbf{s_2} = -\mbf{s_1}$, giving us $n_v = n$ and $n_m = m$, and $d(\mbf{s_1},\mbf{s_2})=0$. In this pseudometric space, the distance $d$ is a measure of how ``different" two spin states are up to a $\mbb{Z}_2$ symmetry. A formal discussion of this metric, including a proof of triangle inequality, is provided in Appendix C of our related work \cite{rbm_loops}. The usefulness of defining the metric this way will be apparent in proposition \ref{t1}. \\
\end{remark}

\begin{remark}
It is important to note that $\{n_v, m_h\}$ is not uniquely determined by $d$. To see this clearly, we rewrite Eq.~(\ref{dis}) in terms of the following Diophantine equation
\begin{equation*}
(2n_v - n)(m-2m_h) = (2d-1)nm,
\end{equation*}
solving for integers $n_v \leq n$ and $m_h \leq m$. It is easy to see that this equation is over-determined by realizing that it is possible for the RHS to have multiple prime factors.
\end{remark}

\subsection{Energy Change}
\label{eng_change}

Equation (\ref{dE0}) gives the change in the energy of the ground state under a mode-assisted update iteration. However, to analyze the stability of the training procedure, it is necessary to look at the energy change of all states. To simplify our discussion, instead of looking at the energy of each individual state, let us consider the average energy of all the states distance $d$ from the modal configuration, which we denote as $\overline{E}(d)$. Note that the average is not the expected value over the joint PMF $\pvh$. Rather, it is an unweighted average (or the expected value over a uniform probability measure). It is interesting to note that this average energy only depends on the ground state energy and the distance $d$ from the ground state.

\begin{proposition}[Average Energy]
\label{t1}
The average energy of states distance $d$ from the ground state is:
\begin{equation*}
\overline{E}(d)=(1-2d)E_0.
\end{equation*}
\end{proposition}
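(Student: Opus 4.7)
The plan is to exploit the gauged-RBM form from Section~\ref{gauge}, so that the ground state is $\mathbf{s}^* = (+1,\dots,+1)$ and $E_0 = -\sum_{ij} W_{ij}$, and then compute the average over the level set of the metric by first conditioning on the pair $(n_v, m_h)$. Specifically, since the distance~(\ref{dis}) from $\mathbf{s}^*$ to a state $\mathbf{s}=(\mathbf{v},\mathbf{h})$ depends on $\mathbf{s}$ only through $n_v$ (the number of flipped visibles) and $m_h$ (the number of flipped hiddens), I would partition the level set $\{d(\mathbf{s},\mathbf{s}^*)=d\}$ into classes indexed by admissible pairs $(n_v,m_h)$ and compute the uniform average on each class separately.

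Fixing $(n_v,m_h)$, the class consists of all states obtained by choosing which $n_v$ of the $n$ visibles and which $m_h$ of the $m$ hiddens to flip, and these two choices are independent. Hence for any $(i,j)$, the uniform average satisfies $\langle v_i\rangle = 1-2n_v/n$, $\langle h_j\rangle = 1-2m_h/m$, and by the independence of the two choices $\langle v_i h_j\rangle = (1-2n_v/n)(1-2m_h/m)$. Therefore the class-average of the energy is
\begin{equation*}
\overline{E}(n_v,m_h) \;=\; -\sum_{ij} W_{ij}\,\langle v_i h_j\rangle \;=\; \bigl(1-2n_v/n\bigr)\bigl(1-2m_h/m\bigr)\,E_0.
\end{equation*}

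The key algebraic step is to expand the product and match it to the metric: $(1-2n_v/n)(1-2m_h/m) = 1 - 2n_v/n - 2m_h/m + 4n_v m_h/(nm) = 1-2d$, exactly by the definition~(\ref{dis}). Thus $\overline{E}(n_v,m_h) = (1-2d)\,E_0$ depends on $(n_v,m_h)$ only through $d$. Averaging these class-averages against the (normalized) class sizes over all admissible $(n_v,m_h)$ at distance $d$ therefore returns the same value, yielding $\overline{E}(d) = (1-2d)E_0$.

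The only subtlety worth flagging, as highlighted in the remark after Definition~\ref{distance}, is that $(n_v,m_h)$ is not determined by $d$, so one might worry that mixing classes with different $(n_v,m_h)$ breaks the formula. The cleanest way to dispatch this concern is the observation in the previous paragraph that $\overline{E}$ is constant across all such $(n_v,m_h)$ sharing the same $d$, so the law of total expectation collapses the class weights and the result is independent of how many pairs realize a given distance. If one preferred an even more self-contained argument, one could verify directly that $\langle v_i h_j\rangle_{d}$ over the whole level set equals $1-2d$ by a symmetry/counting argument on the number of flip patterns contributing each sign, but the conditioning approach above makes the identification with the metric immediate and avoids combinatorial bookkeeping.
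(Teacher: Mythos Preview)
Your proposal is correct and follows essentially the same route as the paper: condition on the pair $(n_v,m_h)$, use symmetry/independence to obtain $\langle v_i h_j\rangle=(1-2n_v/n)(1-2m_h/m)$, identify this product with $1-2d$ via the metric definition, and then note that the resulting class average depends only on $d$ so the law of total expectation collapses the multiple admissible pairs. The only cosmetic difference is that the paper writes the computation as $\E(v_1 h_1)$ from the marginals directly, whereas you factor $\langle v_i h_j\rangle=\langle v_i\rangle\langle h_j\rangle$ explicitly; the content is identical.
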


\begin{proof}
Given some distance $d$, there can be multiple assignments of $\{n_v,m_h\}$ that correspond to this distance. However, if given a particular tuple $\{n'_v,m'_h\}$, we show that the average energy of all states with spins differing from the ground state by $\{n'_v,m'_h\}$ is only dependent on the distance $d'$ corresponding to the tuple, then the average energy of states of distance $d'$ from the ground state is simply the average energy of states of with spins differing from the mode by $\{n'_v,m'_h\}$. \\

The average energy of states with spins differing from the ground state by $\{ n'_v, m'_h \}$ can be expressed as
\begin{equation*}
\E_{\{n'_v,m'_j\}}(E) = \E_{\{n'_v,m'_j\}} \big( \sum_{ij}W_{ij}v_ih_j \big) 
= \big( \sum_{ij} W_{ij} \big) \E_{\{n'_v,m'_j\}}(v_1h_1),
\end{equation*}
where in the last equality, we used the linearity of the expected value and the symmetry of the RBM. We easily see that the marginal probability distribution of a single spin is given by (with the underlying joint distribution being uniform) 
\begin{equation*}
\begin{split}
&\PP(v_1 = +1) = \frac{n-n'_v}{n}, \quad \PP(v_1 = -1) = \frac{n'_v}{n}, \\
&\PP(h_1 = +1) = \frac{m-m'_h}{m}, \quad \PP(h_1 = -1) = \frac{m'_h}{m},
\end{split}
\end{equation*}
which gives us
\begin{equation*}
\E_{\{n'_v,m'_j\}}(E)
= \big[(1-2\frac{n'_v}{n})(1-2\frac{m'_h}{m})\big] \big[ -\sum_{ij}W_{ij} \big]
= E_0(1-2d').
\end{equation*}
Therefore, the average energy of states distance $d'$ from the mode is also $\overline{E}(d') = E_0(1-2d')$.
\end{proof}

Since the average energy distance $d$ from the ground state is only dependent on $d$, we expect this to be true also for the change in energy for a state at a distance $d$ from the ground state, under the weight update routine given in Eq.~(\ref{w_up}).

\begin{proposition}[Energy Change]
\label{t2}
Given any state distance $d$ from the ground state, the change in the energy of that state is given by
\begin{equation*}
\delta E(d) = nm(1-2d).
\end{equation*}
\end{proposition}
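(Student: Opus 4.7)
The plan is to compute the energy change directly using the gauged-RBM normalization from Section~\ref{gauge}, and then recognize that the distance $d$ defined in Definition~\ref{distance} is engineered precisely so that the resulting expression factors through $d$ alone, despite the fact (noted in the remark) that $\{n_v, m_h\}$ is not uniquely recoverable from $d$.

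First I would reduce to the gauged case. Since the RBM is assumed gauged, the ground state is $\mathbf{+1}$ and by Eq.~(\ref{w_up}) the mode update acts as $\delta W_{ij} = -1$ on every weight (in the unit-learning-rate convention). Thus for an arbitrary spin state $\mbf{s} = \{\mbf{v},\mbf{h}\}$,
\begin{equation*}
\delta E(\mbf{s}) = -\sum_{ij} \delta W_{ij}\, v_i h_j \;=\; \sum_{ij} v_i h_j \;=\; \Bigl(\sum_i v_i\Bigr)\Bigl(\sum_j h_j\Bigr).
\end{equation*}

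Next I would parameterize $\mbf{s}$ by how many of its components differ from the ground state $\mathbf{+1}$. Writing $n_v$ and $m_h$ for the numbers of flipped visible and hidden spins (as in Definition~\ref{distance}), the sums become $\sum_i v_i = n - 2n_v$ and $\sum_j h_j = m - 2m_h$. Substituting gives
\begin{equation*}
\delta E(\mbf{s}) = (n-2n_v)(m-2m_h) = nm\left(1-\tfrac{2n_v}{n}\right)\left(1-\tfrac{2m_h}{m}\right).
\end{equation*}

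The final step is the key algebraic identification. Expanding the distance,
\begin{equation*}
1 - 2d = 1 - \tfrac{2n_v}{n} - \tfrac{2m_h}{m} + \tfrac{4 n_v m_h}{nm} = \left(1-\tfrac{2n_v}{n}\right)\left(1-\tfrac{2m_h}{m}\right),
\end{equation*}
so $\delta E(\mbf{s}) = nm(1-2d)$, and this depends on $\mbf{s}$ only through its distance $d$ to the ground state. In particular, although different tuples $\{n_v,m_h\}$ may realize the same $d$, and although the $\mbb{Z}_2$-identified partner of $\mbf{s}$ corresponds to $\{n-n_v, m-m_h\}$, both the factored form above and the definition of $d$ are invariant under these ambiguities, so the result is well-defined. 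The proof therefore isn't really an obstacle anywhere; the only nontrivial point is noticing that the precise mixed-coordinate form chosen for $d$ in Eq.~(\ref{dis}) is exactly the one that makes $1-2d$ multiplicative in the visible/hidden flip fractions, which is what lets the raw product $(n-2n_v)(m-2m_h)$ collapse into a function of $d$ alone.
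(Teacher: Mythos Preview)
Your proof is correct and follows essentially the same approach as the paper: both start from $\delta W_{ij}=-1$ in the gauged RBM, compute $\delta E=\sum_{ij}v_ih_j$, parameterize by the flip counts $(n_v,m_h)$, and recognize the result as $nm(1-2d)$. Your version is slightly cleaner in that you factor $\sum_{ij}v_ih_j=(\sum_i v_i)(\sum_j h_j)$ before substituting, whereas the paper partitions the double sum into four index blocks and adds them up; but this is a presentational difference, not a different argument.
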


\begin{proof}
Again, we only have to focus on one particular assignment of the tuple $\{n_v,m_h\}$ which corresponds to the distance $d$, and show that the change in energy of a state corresponding to that tuple depends only on $d$. Without loss of generality (WLOG), we assume that the first $n_v$ visible nodes are of value $-1$, and the first $m_h$ hidden nodes are of value $-1$. Then the change in energy is given by:
\begin{equation*}
\begin{split}
\delta E(d) &= -\sum_{ij} \delta W_{ij}v_ih_j \\
&= \sum_{ij} v_ih_j \\
&= \sum_{i=1}^{n_v}\sum_{j=1}^{m_h} v_ih_j
+ \sum_{i=1}^{n_v}\sum_{j=m_h+1}^{m} v_ih_j
+ \sum_{i=n_v+1}^{n}\sum_{j=1}^{m_h} v_ih_j
+ \sum_{i=n_v+1}^{n}\sum_{j=m_h+1}^{m} v_ih_j \\
&= n_vm_h - n_v(m-m_h) - (n-n_v)m_h + (n-n_v)(m-m_h)\\
&= 4n_vm_h - 2n_vm - 2nm_h + nm  \\
&= nm(1-2d),
\end{split}
\end{equation*}
where we have used the fact that $\delta W_{ij} = -1$ from Eq.~(\ref{w_up}). 
\end{proof}

\begin{remark}
Note that the energy change is only dependent on the size of the RBM and the distance $d$ from the ground state, so all the states at distance $d$ experience the same energy change. Under a given learning rate $\gamma$, the actual energy change is then
\begin{equation*}
\delta E(d) = \gamma nm(1-2d).
\end{equation*} 
Combining propositions \ref{t1} and \ref{t2}, we see that the energy change can be alternatively written as
\begin{equation}
\label{nm_avg}
\delta E(d) = \gamma nm \frac{\overline{E}(d)}{E_0}.
\end{equation}
\end{remark} 

At this point, it is necessary to take an intermission to look at the role that the mode update term plays in the pre-training procedure. From Eq.~(\ref{nm_avg}), we see that the energy change of a state distance $d$ from the ground state is proportional to the average energy of the states at the same distance $\overline{E}(d)$. In the context of the entire pre-training procedure, this energy change can be interpreted as a constant drift term that pulls the energy back to zero with strength proportional to the average energy of all the states of the same distance. Loosely speaking, the joint distribution will become more uniform under an iteration of mode-assisted update. \\

Note that this behavior can also be achieved with standard regularization procedures such as an exponential weight decay term like $\delta W_{ij} = -W_{ij}$. However, such regularization techniques are usually undesirable as they do not induce an effective sampling of a multi-modal distribution. Our procedure, however, does not suffer from such drawbacks, and in fact promotes the effective sampling of a multi-modal distribution  (see section \ref{eff}).

\subsection{Approaching Uniformity}

In this section, we formalize the argument that the RBM energies over all states become more uniform under a mode-assisted update iteration. To do so, we mainly focus on the energy variance across all states, and show that it must decrease under a suitable learning rate. This statement can be made more precise as follows.

\begin{theorem}[Decrease in Energy Variance]
\label{uni_var}
If $0 < \gamma < -\frac{2E_0}{nm}$, then the variance of the energies $\mathrm{Var}_{\mathbf{s}}(E(\mathbf{s}))$ over all spin states decreases. The largest decrease in variance occurs when $\gamma = -\frac{E_0}{nm}$.
\end{theorem}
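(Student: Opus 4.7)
The plan is to write the post-update energy as a simple affine function of the pre-update energy and reduce the theorem to a one-variable quadratic optimization. First I would invoke the gauge reduction of Section~\ref{gauge} to assume without loss of generality that the RBM is gauged so that $E_0 = -\sum_{ij} W_{ij}$ and the ground state is $\mathbf{+1}$; the mode update then acts as $W_{ij} \mapsto W_{ij} - \gamma$, so
\begin{equation*}
E'(\mathbf{s}) \;=\; E(\mathbf{s}) + \gamma\, Y(\mathbf{s}),
\qquad
Y(\mathbf{s}) \;:=\; \sum_{ij} v_i h_j \;=\; \Bigl(\textstyle\sum_i v_i\Bigr)\Bigl(\textstyle\sum_j h_j\Bigr).
\end{equation*}
By the calculation already carried out in Proposition~\ref{t2}, $Y(\mathbf{s}) = nm\bigl(1 - 2d(\mathbf{s})\bigr)$, which ties $Y$ directly to the distance metric of Definition~\ref{distance}.

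Next I would expand the variance over the uniform measure on $\mathbf{s}\in\{-1,+1\}^{n+m}$:
\begin{equation*}
\mathrm{Var}_{\mathbf{s}}\bigl(E'(\mathbf{s})\bigr) \;=\; \mathrm{Var}(E) \;+\; 2\gamma\,\mathrm{Cov}(E,Y) \;+\; \gamma^2\,\mathrm{Var}(Y).
\end{equation*}
The two moments of $Y$ and the cross-moment are straightforward because the spins are i.i.d.\ uniform on $\{\pm 1\}$ under this measure, so $\mathbf{E}[v_i v_k] = \delta_{ik}$ and $\mathbf{E}[h_j h_l] = \delta_{jl}$. A short computation then yields $\mathbf{E}[Y] = 0$, $\mathrm{Var}(Y) = nm$, and
\begin{equation*}
\mathrm{Cov}(E,Y) \;=\; \mathbf{E}[EY] \;=\; -\sum_{ij,kl} W_{ij}\,\mathbf{E}[v_i v_k h_j h_l] \;=\; -\sum_{ij} W_{ij} \;=\; E_0,
\end{equation*}
where the final equality uses the gauge assumption.

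Substituting gives the clean quadratic
\begin{equation*}
\Delta\mathrm{Var}(\gamma) \;=\; \mathrm{Var}(E') - \mathrm{Var}(E) \;=\; nm\,\gamma^2 + 2 E_0\,\gamma,
\end{equation*}
from which both claims drop out: $\Delta\mathrm{Var}(\gamma) < 0$ iff $\gamma$ lies strictly between the two roots $0$ and $-2E_0/nm$ (valid because $E_0 < 0$), and the minimum of the parabola is at $\gamma^\star = -E_0/nm$. The main obstacle is essentially bookkeeping: verifying that the gauge transformation indeed leaves $\mathrm{Var}_{\mathbf{s}}(E)$ invariant (since it is a relabeling of states), being careful that $E_0$ is negative so the upper endpoint $-2E_0/nm$ is positive, and confirming that the i.i.d.\ spin calculation applies term-by-term to the double sum defining $\mathrm{Cov}(E,Y)$. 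No deeper structural argument is needed beyond Propositions~\ref{t1}--\ref{t2}.
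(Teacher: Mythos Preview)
Your argument is correct and yields the same quadratic $\Delta\mathrm{Var}(\gamma)=nm\,\gamma^{2}+2E_{0}\gamma$ that the paper obtains, but the route is genuinely different. The paper conditions on the distance variable $D$ and applies the law of total variance: the first piece $\E_{D}[\Var(E\mid D)]$ is unchanged because the update shifts all states at a given $D$ by the same amount (Proposition~\ref{t2}), and the second piece $\Var_{D}[\overline{E}(D)]$ is evaluated via Proposition~\ref{t1} and the explicit formula $\Var(D)=\tfrac{1}{4nm}$, giving $\tfrac{E_{0}^{2}}{nm}(1+\gamma nm/E_{0})^{2}$. Expanding this recovers exactly your expression.

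Your direct moment computation is shorter and more elementary: you never need the distance PMF, its variance, or the total-variance decomposition, and Propositions~\ref{t1}--\ref{t2} enter only as commentary rather than as logical dependencies. The trade-off is that the paper's decomposition makes visible a structural fact your version hides, namely that the \emph{within-shell} energy spread is exactly preserved and only the \emph{between-shell} spread contracts; this is what later justifies the ``seesaw'' picture and the discussion of suboptimal updates. Either derivation is sufficient for the theorem as stated.
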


\begin{proof}
We reiterate the fact that the underlying PMF for the states is assumed to be uniform, or $f(\mathbf{s})=\frac{1}{2^{n+m}}$ for every nodal configuration $\mathbf{s}$. We can then define a random variable $D$ with its PMF being:
\begin{equation*}
f_D(d) = \frac{1}{2^{n+m}}\sum_{d(n_v,m_h)=d}{n \choose n_v}{m \choose m_h},
\end{equation*}
which can be interpreted as the probability of a randomly chosen state to be a distance $d$ from the ground state. From this PMF expression, we can easily derive the expected value and the variance of the distance of two randomly chosen states
\begin{equation}
\label{expvar}
\E(D) = \frac{1}{4}, \qquad \Var(D) = \frac{1}{4nm},
\end{equation}
where we see that the variance is small relative to the expectation value for a large system. We then use the law of total variance to write the variance of the energies over all states as
\begin{equation}
\label{var_2}
\begin{split}
\Var(E(\mbf{s})) 
=\, & \E_{D}\big[ \Var_{\mbf{s}}(E(\mbf{s}) \cond d(\mbf{s}) = D) \big] \\
+\, & \Var_D\big[ \E_{\mbf{s}}( E(\mbf{s}) \cond d(\mbf{s}) = D) \big].
\end{split}
\end{equation}
We first begin by focusing on the first term. Note that the term $\Var_{\mbf{s}}(E(\mbf{s}) \cond d(\mbf{s}) = D)$ is the conditional variance of energies of the states distance $D$ from the mode. If we update the energies according to Eq.~(\ref{nm_avg}), then the new variance can be written as $\Var_{\mbf{s}}(E(\mbf{s})+\gamma nm(1-2D) \cond d(\mbf{s}) = D)$. The term $\gamma nm (1-2D)$ is dependent only on $D$ but not the specific nodal configuration $\mathbf{s}$, so it is just a constant offset in the context of the conditional variance, and the variance will remain constant. Therefore, the first term of the variance decomposition is constant, and we only have to focus on the second term, which can be conveniently written as: 
\begin{equation*}
\begin{split}
\Var_D(\overline{E}(D)) 
&= \Var_D(E_0(1-2D)) \\
&= 4E_0^2\Var(D) = \frac{E_0^2}{nm}.
\end{split}
\end{equation*}
After a weight update, this variance becomes
\begin{equation}
\label{quadE}
\begin{split}
\Var_{D}(\overline{E}(D)+\gamma nm \frac{\overline{E}(d)}{E_0}) 
=& 4E_0^2(1+\frac{\gamma nm}{E_0})^2\Var(D) \\
=& \frac{E_0^2}{nm}(1+\frac{\gamma nm}{E_0})^2.
\end{split}
\end{equation}
In this form, it is easy to see that the variance decreases when the learning rate satisfies
\begin{equation}
\label{gammabound}
0<\gamma<-\frac{2E_0}{nm},
\end{equation}
with the largest decrease being $\delta\Var_D(\overline{E}(D))=4E_0^2\Var_D(D)=\frac{E_0^2}{nm}$, which occurs at the learning rate $\gamma = -E_0/nm$. This is then our {\it optimal} learning rate.
\end{proof}

\begin{remark}
To avoid confusion, note that $E_0$ is negative, so $-\frac{2E_0}{nm}$ is positive, so the learning rate $\gamma$ is bounded in some positive interval. Note that the two biases for the visible and hidden spins can be expressed as two {\it ghost spins}~\cite{rbm_loops}, thereby effectively adding one more spin to each layer. By taking into account the biases, we see that the largest decrease of the variance occurs when 
\begin{equation}
\label{gammaapp}
\gamma \approx -E_0/(n+1)(m+1),
\end{equation}
which is what we use in the main text. \\
\end{remark}

There are two important things to note here. First, the learning rate, as presented in Eq.~(\ref{gammaapp}) is generally very large and is only {\it optimal} in the sense that it provides the fastest convergence to a uniform joint PMF, which is desirable for a {\it stable} pre-training routine, but not necessarily optimal for minimizing the KL divergence. The practical usefulness of Eq.~(\ref{gammaapp}) is to mainly provide an upper bound to the learning rate that ensures stability. It should be noted that the analysis ignores the presence of the data term (see Eq. (6) in the main text) and is only carried out over a single iteration; in other words, it may be possible that a large learning rate will force the system into a local minimum in the KL divergence rather quickly. Therefore, in the practical setting a smaller learning rate would be more beneficial. In the main paper, we then {\it normalized} this learning rate with the learning rate of CD, which results in $\epsilon_{CD}\gamma<\gamma$ (as $\epsilon_{CD}<1$). 

The second thing to note is that Eq.~(\ref{gammaapp}) is not exact as the {\it ghost spins} are fixed nodes that cannot be ``flipped", so theorem \ref{t1} no longer applies, meaning that the average energy of states distance $d$ from the ground state can no longer be uniquely determined by $E_0$ and $d$ alone. Nonetheless, for large RBMs, the contribution from biases are relatively small, and the approximation is close to exact. 

\subsection{Suboptimal Updates}

Before we conclude this section, we make two final remarks concerning suboptimal updates, or updates that are not informed by the global mode directly. The first remark pertains to a practical setting where locating the global mode is difficult or too computationally expensive, and only an {\it approximate} mode can be obtained, or a state with energy close to the ground state. We discuss how an update informed by this state still ensures stability. The second remark compares a mode-assisted update with an update with the model term sampled by some form of stochastic algorithm (such as CD), and we show that the latter update procedure does not ensure stability. \\

Note that in Eq.~(\ref{gauge}), we transformed the weight elements such that the ground state is $\mathbf{v}^* = \mathbf{+1}$ and $\mathbf{h}^* = \mathbf{+1}$. However, this procedure is general and can be done for any given state. Given any two states, $\mathbf{v}_1$ and $\mathbf{h}_1$ with some associated energy $E_1$, it is always possible to gauge the RBM in a way such that $\mbf{v_1}=\mbf{+1}$ and $\mbf{h_1}=\mbf{+1}$. The previous proofs will still carry through for $E_1$ as long as $E_1<0$. This means that the mode training procedure {\it does not} hinge on the fact that the weight update has to be informed by the exact ground state, and any state sufficiently close to the ground state should suffice. However, it should be noted that using the ground state to inform the weight update provides the greatest decrease in energy variance since the maximum of $\delta\Var_D(\overline{E}(D))$ scales quadratically with $E_0$ (see Eq.~(\ref{quadE})). \\

Note that in theorem \ref{uni_var}, the argument that the conditional variance of the energies conditioned on some distance $d$ from the ground state does not change is based on the fact that the weights are updated uniformly across the RBM according to Eq.~(\ref{w_up}). However, for a stochastic algorithm, the weight updates are clearly not uniform (or even deterministic for that matter), so nothing can be said about the change of the conditional variance. It is possible for the conditional variance to increase under a stochastic update, thus pulling the energies away from uniformity if the magnitude of the increase overcomes the decrease in the second term in Eq.~(\ref{var_2}) (the ground state variance). \\

To conclude this subsection, we discuss briefly the contribution of the data term in updating the weight matrix. Clearly, if we look at the gauged RBM matrix, the change in each element generated by the data term is bounded above by $+1$, meaning that its contribution cannot overcome the guaranteed $-1$ decrease generated by the mode update term. This means that it is impossible for the ground state energy to decrease even in the presence of the data term, so the mode of the joint distribution must not increase, thus the training never diverges. This effectively ensures the global stability of our mode-assisted training method. 


\section{Efficient Sampling of Multi-modal Distributions}
\label{eff}

So far, we have shown that our update procedure guarantees stability. However, as briefly mentioned at the end of section \ref{eng_change}, stability is also guaranteed by standard regularization terms such as the weight decay term, $\delta W_{ij} = -W_{ij}$. In this section, we make the crucial distinction between our procedure and standard weight regularization procedures by pointing out the key phenomenon that our procedure is capable of efficiently exploring the landscape of a multi-modal PMF. \\

This property of the mode-training method is most readily analyzed from the perspective of the {\it frustration index} of the RBM instance. The frustration index can be interpreted as a measure of the difficulty of discovering the nodal ground state of a given RBM instance, and interestingly, an increase in the frustration index is correlated with an increased rate of exploration of the multi-modal distribution. Therefore, in some sense, for a given iteration of weight updates, the difficulty of finding the mode of that distribution is ``compensated" by an increased efficiency of PMF exploration. \\

We begin by formally defining the frustration index, followed by a brief explanation of how the mode-training algorithm explores efficiently the PMF. Finally, we relate the two concepts in a cohesive manner. We provide an extensive analysis on the frustration of the RBM and its practical applications in our related work \cite{rbm_loops}.

\subsection{Frustration Index}
\label{frus}

The {\it frustration index} is the ratio between the sum of unsatisfied couplings at the ground state and the sum of all coupling strengths. Formally, for a gauged RBM, it can be defined as follows
\begin{equation*}
f = \frac{1}{2}\left[ \frac{\sum_{ij} |W_{ij}| - \sum_{ij} W_{ij}}{\sum_{ij} |W_{ij}|} \right].
\end{equation*}
This index is closely related with the degeneracy of the low-energy states. In other words, with an increase in the frustration index, the excited states will be spaced closer to the ground state in energy. Furthermore, for a highly frustrated system, the transition from the ground state to the excited states usually involves flipping a large cluster of nodes. This gives rise to a large population of local minima in the energy landscape spaced {\it far apart in distance but close together in energy} (in terms of the metric discussed in section \ref{met}), and this property of a highly frustrated system makes it difficult for local search algorithms to locate the global minimum. This motivates the need for an algorithm that is able to learn the long-range correlations of the RBM spins, and a possible candidate of this algorithm is presented in section \ref{mem}.

\subsection{Inefficiency of Weight Decay}

In this section, we discuss briefly why the standard weight decay algorithm $\delta W_{ij} = -\gamma W_{ij}$ (where $\gamma$ is some learning rate) is not efficient in assisting local algorithms in sampling a multi-modal distribution. To begin with, we first recall that the joint distribution of the RBM is
\begin{equation*}
\pvh = \exp( -E(\mbf{v}, \mbf{h}) ),
\end{equation*}
where $E(\mbf{v}, \mbf{h}) = \sum_{ij} W_{ij}$ for a gauged RBM. Note that the weight decay update is a contracting affine transformation of the energies of all states, or simply a rescaling of the energies by some constant $\beta = (1 - \gamma) < 1$, meaning that the joint distribution transforms as
\begin{equation*}
\pvh \rightarrow \pvh^{\beta},
\end{equation*}
where the normalization condition is ignored. \\

Of course, the distribution does become more uniform under this transformation; however, the {\it ordering} of the states with respect to their energies will not change, meaning that the ordering of the dominant modes remains invariant under this transformation. In other words, a poorly initialized Markov chain trapped under a dominant mode will still remain trapped unless $\beta$ becomes sufficiently small; this means that a large learning rate, $\gamma$, is required to free the Markov chain and allow efficient exploration of the joint distribution. However, a large learning rate in this context is undesirable, as it brings the RBM to uniformity in a drastic manner, which voids much of the information gained from the previous iterations of pre-training.
The inefficiency of this approach boils down to the indiscriminate update of the weight matrix that is ignorant of the energy ordering of the states or the distance between them (see definition \ref{distance}). \\

Our mode-assisted update, on the other hand, updates the weight matrix based on the ground state configuration of the RBM, resulting in a maximal increase in energy for the ground state, and the energy change is ``propagated" to the other states based on their distances from the ground state (see proposition \ref{t2}). An entirely different energy landscape will then emerge under this update procedure even under a small learning rate, and it is likely that a new ground state at a faraway distance will ``pop" up. The next update iteration is then based on this new found mode, and the process is repeated. Effectively, we are {\it dynamically} sampling the energy landscape by making large leaps between dominant states without resorting to forcing uniformity on the energies.

\subsection{Global Mode Cycling}

For the sake of simplicity, consider a gauged RBM with a joint distribution having three dominant states, with their RBM energies being $E_0 < E_1 < E_2$. The heuristic analysis in this section can be easily generalized to multi-modal distributions with arbitrary number of dominant states. We can also assume that the pairwise distances between the three modes are the same (meaning that the three modes form an equilateral triangle under the metric defined in definition \ref{distance}), which we can then denote simply as $d$. \\

If we assume that the learning rate is $\gamma$, then from Eq.~(\ref{nm_avg}), we see that the new energies of the three states will become
\begin{equation*}
\begin{split}
E_0^{(1)}&=E_0^{(0)} + nm\gamma \\
E_1^{(1)}&=E_1^{(0)} + nm\gamma(1-2d) \\
E_2^{(1)}&=E_2^{(0)} + nm\gamma(1-2d),
\end{split}
\end{equation*}
where we are assuming that the magnitude of the learning rate is much larger than the energy gaps \footnote{This is a justified assumption if the system is highly frustrated, as the energy gaps near the ground state are generally very small for such system.}, or more precisely
\begin{equation}
\label{gamma}
\gamma > \frac{E_1^{(0)} - E_0^{(0)}}{2nmd},
\end{equation}
where we note that the lower bound of gamma is proportional to the energy difference between the first excited state and the ground state. This guarantees that after one update, the ordering of the new energies of the states will become
\begin{equation*}
E_1^{(1)} < E_2^{(1)} < E_0^{(1)},
\end{equation*}
which means that $E_1^{(1)}$ is the new ground state energy, and the next iteration of weight update will be based on state $E_1^{(1)}$, resulting in the following new energies
\begin{equation*}
\begin{split}
E_1^{(2)}&=E_1^{(1)} + nm\gamma \\ 
E_2^{(2)}&=E_2^{(1)} + nm\gamma(1-2d) \\
E_0^{(2)}&=E_0^{(1)} + nm\gamma(1-2d),
\end{split}
\end{equation*}
The energies are then reordered as
\begin{equation*}
E_2^{(2)} < E_0^{(2)} < E_1^{(2)},
\end{equation*}
so $E_2^{(2)}$ becomes the new ground state energy. And similarly, the third iteration will recover the original energy ordering $E_0^{(3)} < E_1^{(3)} < E_2^{(3)}$. \\

Therefore, we see that in general, whenever we perform a weight update, the energy ordering of the modal states will experience a left circular shift, so we are, in some sense, sampling the multiple modes in a {\it cyclic fashion}, which allows us to effectively cover a large volume of the probability measure. 

\subsection{Relationship between Frustration and Mode Sampling}

Now, we discuss how an increase in the frustration index is conducive to an efficient sampling of the multi-modal distribution. We here consider simply a gauged $n\times n$ RBM, with ground state energy $E_0$. We denote the average energy of states distance $d$ from the ground state as $\overline{E}(d)$ (see proposition \ref{t1}). Under an iteration of mode update, the new energies are (see Eq.~(\ref{nm_avg}))
\begin{equation*}
E_0' = E_0 + n^2\gamma 
\qquad
\overline{E}(d)' = \overline{E}(d) + n^2\gamma(1-2d).
\end{equation*}

\subsubsection{Small Frustration}

For the sake of simplicity, consider the case where the frustration index of the RBM is zero, then all the weights can be assumed positive. Furthermore, we make the simplifying assumption that the weights are iid\footnote{From here on, {\it iid} will serve as the abbreviation for {\it independent and identically distributed}.} random variables with uniform distribution in $[0,1]$. We then make the following claim.

\begin{proposition}
If we update the weight matrix continuously with the mode-assisted update procedure, then the new ground state will differ from the original ground state by distance $d \sim \frac{1}{n}$ almost surely. \\

More formally put, if we let $E_{\min}(d)$ be the minimum energy of states distance $d$ from the ground state, and $\Delta E(d) = E_{\min}(d) - E_0$. Then the smallest learning rate for which a new ground state can emerge is
\begin{equation*}
\gamma' = \inf \{ \gamma \cond \exists d\in (0,1], \quad 2d n^2\gamma  = \Delta E(d) \}.
\end{equation*}
If we let $d'$ be the distance such that $2d'n^2\gamma = \Delta E(d')$, then
\begin{equation*}
\lim_{n \to \infty} \PP( d' > \frac{1}{n} ) = 0.
\end{equation*}
\end{proposition}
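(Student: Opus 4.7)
The plan is to rewrite the condition for a new ground state to emerge at distance $d$ under learning rate $\gamma$ as an optimization of the ratio $\rho(d) := \Delta E(d)/d$. The defining equation $2dn^2\gamma = \Delta E(d)$ reads $\gamma = \rho(d)/(2n^2)$, so $d' = \argmin_{d > 0}\rho(d)$. Since the smallest positive value achievable in the pseudometric of Definition~\ref{distance} is exactly $1/n$, attained only by the single-node flips $(k, l) = (1, 0)$ and $(0, 1)$, it suffices to show that with probability tending to $1$, every admissible $d > 1/n$ obeys $\rho(d) > \rho(1/n)$.

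First I would dispose of pure single-layer flips. For $(k, 0)$ with $k \geq 2$, the zero-frustration assumption $W_{ij} > 0$ gives the exact formula $\Delta E(k/n) = 2\sum_{i = 1}^{k} S_{(i)}$, where $S_{(i)}$ are the ascending order statistics of the row sums $S_i = \sum_j W_{ij}$. Hence $\rho(k/n) = (2n/k)\sum_{i=1}^{k} S_{(i)}$ is strictly increasing in $k$, because the running average of order statistics is nondecreasing. An identical argument for $(0, l)$ handles pure hidden flips. The single-flip baseline is therefore $\rho^* := 2n \min\{S_{(1)}^{v}, S_{(1)}^{h}\}$, and extreme-value theory applied to the $2n$ independent approximately-Gaussian sums (each of mean $n/2$ and variance $n/12$) gives $\rho^* \leq n^2 - 2c\, n^{3/2}\sqrt{\log n}$ for an explicit constant $c > 0$ with probability $\to 1$.

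The crux is ruling out mixed flips $(k, l)$ with $k, l \geq 1$. Expanding $T(K, L)$ as in the proof of Proposition~\ref{t2}, one has $T(K, L) = \sum_{i \in K} S_i + \sum_{j \in L} S'_j - 2\sum_{i \in K, j \in L} W_{ij}$ with $\E[T] = (k+l)n/2 - kl$ and $\Var(T) = O((k+l)n)$. A subgaussian concentration inequality combined with a union bound over the $\binom{n}{k}\binom{n}{l}$ configurations yields $\min_{K, L} T(K, L) \geq (k+l)n/2 - kl - C(k+l)\sqrt{n \log n}$ with high probability. Dividing by $d(k, l) = (k+l)/n - 2kl/n^2$ gives $\rho(d(k,l)) \geq n^2 - C'\, n^{3/2}\sqrt{\log n}$, and careful tracking of constants makes $C'$ strictly smaller than $2c$, so that $\rho(d(k,l)) > \rho^*$ uniformly in $(k, l)$ with $k, l \geq 1$.

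The main obstacle is obtaining a sufficiently sharp concentration for $\min_{K, L} T(K, L)$, because the family $\{T(K, L)\}$ is strongly correlated through the shared weights $W_{ij}$, so a naive union bound may not beat $\rho^*$ in borderline regimes. I would address this by either (i) applying Talagrand's concentration for the minimum of a large family of Lipschitz functions of independent uniforms, or (ii) splitting into regimes $k + l = O(1)$, $k + l = \Theta(\sqrt{n})$, and $k + l = \Theta(n)$, invoking Slepian-type Gaussian comparison in the small-$(k+l)$ regime and subgaussian tail bounds elsewhere. Summing failure probabilities over the polynomially many admissible pairs $(k, l)$ then yields $\PP(d' > 1/n) \to 0$.
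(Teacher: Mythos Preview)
Your approach differs substantially from the paper's. The paper restricts to prime $n$, which forces every distance of the form $k/n$ (with $k<n$) to arise only from \emph{pure} single-layer flips; it then models $E(d)-E_0$ for a single random state at distance $d$ as approximately Gaussian, writes $\PP\big(2dn^2\gamma>E(d)-E_0\big)=\tfrac{1}{2}\erfc\big(\sqrt{6}(1-2\gamma)nd\big)$, and takes an independence-style union bound over the $2\binom{n}{k}$ states at that distance to obtain a function $J(n,k,\beta)$ which is shown to tend to $0$ for all $k\ge2$ once $\beta$ is calibrated so that $J(n,1,\beta)=1-\epsilon$. By contrast, you recast the first-crossing condition as $d'=\argmin_{d>0}\rho(d)$ with $\rho(d)=\Delta E(d)/d$, dispose of pure flips \emph{deterministically} via the monotonicity of running averages of order statistics (a strictly stronger statement than the paper's probabilistic treatment of the same regime), and then confront the mixed flips $(k,l)$ with $k,l\ge1$ directly through concentration plus a union bound. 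Your route is structurally more complete---the paper's prime-$n$ device silently excludes all mixed-flip distances $d\notin\{k/n:k\in\mathbb{Z}\}$ from consideration---but the price is that you must actually compare the constant governing $\rho^*$ against the constant governing $\min_{k,l\ge1}\rho(d(k,l))$, a step you correctly flag as the main obstacle and defer to Talagrand or Slepian machinery without carrying it through. In short, the paper's argument is shorter and more heuristic; yours is sounder in scope but not yet closed at the constant-matching step.
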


\begin{proof}

Given any state distance $d$ from the ground state, we have
\begin{equation*}
\PP\big( 2dn^2\gamma > E(d) - E_0 \big) = \frac{1}{2}\erfc\big( \sqrt{6}(1-2\gamma)nd \big).
\end{equation*}
WLOG, we can also assume that $n$ is a prime number, then the number of states distance $\frac{k}{n}$ from the ground state (where $k<n$) is $2{n \choose k}$, which gives us (denoting $\beta = \sqrt{6}(1-2\gamma)$ and $k = nd$).
\begin{equation*}
\begin{split}
& \PP\big( 2dn^2\gamma > \Delta E_{\min}(d) \big) \\
= & 1 - \Big[ 1 - \frac{1}{2}\erfc\big( \sqrt{6}(1-2\gamma)nd \big) \Big]^{2 {n \choose k} } \\
\sim & 1 - \exp\Big[ -{n \choose k} \frac{e^{-\beta k^2}}{\sqrt{\pi} \beta k} \Big] \\
\equiv & J(n,k,\beta).
\end{split}
\end{equation*}
Note that $\forall \epsilon \in (0,1)$, we let $\beta'$ such that $J(n,1,\beta') = 1-\epsilon$, then we have
\begin{equation*}
\forall k \in [2,n], \qquad
\lim_{n \to \infty} J(n,k,\beta') = 0,
\end{equation*}
which proves the proposition.
\end{proof}

This result implies that in the limit of large $n$, the new ground state is only likely going to differ from the old ground state by distance $d\sim \frac{1}{n}$, so we are only moving away from the old ground state by a very small distance. This means that a small frustration is not conducive to an efficient sampling of the phase space.

\subsubsection{Large Frustration}

A highly frustrated system is generally hard to study, so we here provide a brief heuristic argument for the efficient sampling of the PMF for a highly frustrated RBM. Recall that in the case of large frustration, the first excited state differs from the ground state by a large number of nodes (hence a large distance $d$) but by only a small amount of energy. Also recall from Eq.~(\ref{gamma}) that the lower bound of the learning rate scales proportionally to the energy difference and inversely proportionally to the distance. Putting the two results together, we see that in order for the first excited state to become the new ground state, we only require a very small learning rate (which is conducive to a faster convergence of the KL-divergence), and furthermore, transitioning from the ground state to the new ground state effectively allows us to traverse a large distance, which allows us to efficiently sample the full PMF.


\section{Defining Modal Correspondence}

The goal of this section and the following is to show that the mode of the marginal distribution of the visible layer, $\pv$, and the mode of the joint distribution, $\pvh$, are {\it strongly correlated}. We will dedicate this section to a formal definition of this notion of correspondence, and provide a full proof of correspondence in the following section. For now, we can interpret this strong correspondence as the phenomenon that there is a high chance for the mode of $\pv$ and the mode of $\pvh$ to overlap in the configuration of $\mbf{v}$, meaning that the mode of $\pvh$ can be used to ``approximate" the mode of $\pv$. 

\subsection{Unnormalized PMFs}

Recall that we base the analysis in this section on an $n\times m$ unbiased RBM with nodal values of $\mathbf{v}\in \{-1,1\}^n$ and $\mathbf{h}\in \{-1,1\}^m$. The discussion in this section can be easily extended to a biased RBM. To ease the burden of notation, we first begin by defining an {\it angle} variable $\bs{\theta} = \mathbf{v}\cdot \mathbf{W}$, which allows us to rewrite the RBM energy and the joint probability mass function (PMF) as follows:
\begin{equation*}
\begin{split}
E &= -\bs{\ta}\cdot \mbf{h}, \\
\pvh &= \frac{1}{Z}e^{-E} = \frac{1}{Z}e^{\bs{\ta}\cdot \mbf{h}},
\end{split}
\end{equation*}
where $Z$ is the partition function of the RBM. The marginal PMF of the visible layer can be obtained by fixing the visible layer and summing the joint PMF over all the hidden layer configurations:
\begin{equation}
\label{sum}
p(\mbf{v}) = \sum_{\mbf{h}} p(\mbf{v},\mbf{h}) = \frac{1}{Z}\sum_{\mbf{h}} e^{\bs{\theta}\cdot \mbf{h}} = \frac{1}{Z}\prod_{j=1}^m 2\cosh(\theta_j),
\end{equation}
where the last equality is obtained by factoring the sum into each individual hidden nodes. \\

Since we are mainly concerned with the correspondence of the modal configurations instead of the normalized probability mass, we can simply ignore the constant prefactor $\frac{1}{Z}$ as the normalization prefactor and simply look at the unnormalized PMFs:
\begin{equation*}
\Pvh = e^{\bs{\theta}\cdot \mbf{h}} \qquad \Pv = \prod_{j=1}^{m}2\cosh{\theta_j},
\end{equation*}
where the use of the capital letter $P$ is to denote the unnormalized PMF. Note that since $p \mapsto P$ is an affine transformation, the ordering of the states in terms of their energies is invariant. \\

An issue we have to first address is that the nodal configuration of the joint distribution is described by the configurations of both layers $\{\mbf{v},\mbf{h}\}$, while the nodal configuration of the marginal distribution is only described by the visible layer $\mbf{v}$. So in order to compare the nodal configurations of the two PMFs, we have to relegate $\Pvh$ into a PMF that only depends on $\mbf{v}$, which we do as follows:

\begin{definition}
Given a PMF $\Pvh$, we denote
\begin{equation*}
\Qv = \max_{\mbf{h}}\Pvh,
\end{equation*}
\end{definition}

\begin{remark}
In other words, $\Qv$ is the maximum of the $\Pvh$ over all $\mbf{h}$ under some fixed $\mbf{v}$. 
Note that the purpose of this definition is to have the mode of $\Qv$ be the same as the mode of $\Pvh$ ``projected" onto the space of $\mbf{v}$. In other words, if we let $\{ \mbf{v}^*, \mbf{h}^* \}$ be the mode of the joint distribution $\Pvh$, then we have the following:
\begin{equation*}
\argv \Qv = \argv (\argh \Pvh) = \mbf{v}^*
\end{equation*}
This means that the mode of the joint distribution $\Pvh$ is the same as the mode of $\Qv$ in the $\mbf{v}$ component. \\
\end{remark}

\begin{remark}
Note that there is a bijection between the visible configurations and the angle variables given by $\bs{\theta} = \bs{v}\cdot \mbf{W}$, so we can make $Q$ depend on $\bs{\ta}$ instead, or $Q(\bs{\ta})$, which is usually the form that we will be using for this section. Similarly, we can also write $P(\bs{\ta})$ as the unnormalized marginal distribution.\\
\end{remark}

To simplify the analysis of modal correspondence, we first obtain a closed form expression for $\Qv$:

\begin{lemma}
\label{qv}
$\Qv = \exp(\sum_j |\theta_j|)$.
\end{lemma}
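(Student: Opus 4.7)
The plan is to exploit the fact that with $\mathbf{h}\in\{-1,1\}^m$ unconstrained and the exponent $\boldsymbol{\theta}\cdot\mathbf{h}=\sum_{j=1}^m \theta_j h_j$ decoupling across hidden units, the maximization of $P(\mathbf{v},\mathbf{h})=e^{\boldsymbol{\theta}\cdot\mathbf{h}}$ over $\mathbf{h}$ reduces to $m$ independent single-variable maximizations. First I would note that the exponential is strictly monotonic, so $\max_{\mathbf{h}} e^{\boldsymbol{\theta}\cdot\mathbf{h}} = \exp(\max_{\mathbf{h}}\boldsymbol{\theta}\cdot\mathbf{h})$, which reduces the problem to maximizing a linear function of $\mathbf{h}$.

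Next I would observe that $\max_{\mathbf{h}\in\{-1,1\}^m}\sum_j \theta_j h_j = \sum_j \max_{h_j\in\{-1,1\}}\theta_j h_j$ because each term depends on a single $h_j$ and the feasible set is a Cartesian product. For a single term, $\max_{h_j\in\{-1,1\}}\theta_j h_j = |\theta_j|$, attained at $h_j^*=\mathrm{sign}(\theta_j)$ (with either choice when $\theta_j=0$). Summing over $j$ gives $\max_{\mathbf{h}}\boldsymbol{\theta}\cdot\mathbf{h}=\sum_j|\theta_j|$, so that $Q(\mathbf{v})=\exp\bigl(\sum_j|\theta_j|\bigr)$.

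There is essentially no obstacle here: the bipartite structure of the RBM guarantees that the hidden nodes decouple once $\mathbf{v}$ is fixed, so the joint maximization factorizes trivially. The only mild care needed is in handling the case $\theta_j=0$, where both values of $h_j$ are optimal but yield the same contribution $0=|\theta_j|$, so the formula still holds. If desired, I would close by remarking that the maximizer $\mathbf{h}^*(\mathbf{v})=\mathrm{sign}(\boldsymbol{\theta})$ (componentwise) is exactly the deterministic ``mean-field'' hidden activation one would write down by thresholding, which foreshadows why $\mathbf{h}^*$ of the joint mode and the rounded conditional mode of $\mathbf{h}$ given $\mathbf{v}^*$ coincide.
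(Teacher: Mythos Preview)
Your proof is correct and follows essentially the same route as the paper: use monotonicity of the exponential to pull the $\max$ inside, decouple the sum over the independent hidden units, and note that $\max_{h_j\in\{-1,1\}}\theta_j h_j=|\theta_j|$. If anything, your version is slightly cleaner, since the paper's proof writes $\argmax$ where it means $\max$ throughout.
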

\begin{proof}
Note that the expression for $\Pvh$ can be written as $\Pvh = \exp(\sum_j \theta_j h_j)$. It then follows that $\argh \Pvh = \exp(\argh \sum_j \theta_j h_j) = \exp(\sum_{j=1}^m\argmax_{h_j} (\theta_j h_j))$. Since $h_j \in \{-1,1\}$, it is easy to see that $\argmax_{h_j}(\theta_j h_j) = |\theta_j|$. Therefore, we have $\Qv = \argh \Pvh = \exp(\sum_j |\theta_j|)$
\end{proof}

Now, if we denote $\vs$ as the $\mbf{v}$ component of the mode of $\Pvh$ and $\vd$ as the mode of $\Pv$, then the question of whether the marginal mode equals to the joint mode can be succinctly expressed as
\begin{equation*}
\vd \stackrel{?}{=} \vs.
\end{equation*}
The equality, in fact, does not hold in the absolute sense, and it is very easy to construct pathological examples to violate the equality. However, for practical purposes, we only need this equality to hold with some non-negligible probability for an RBM with weights randomly sampled from some distribution. We then formally define the notion of correspondence as follows

\begin{definition}
\label{defcor}
Given an $n\times m$ RBM with weights $\mbf{w}$ sampled from some distribution $f_{\mbf{W}}(\mbf{w})$, we say that the marginal mode and joint mode of the RBM are {\bf strongly correlated} if the following holds
\begin{equation}
\label{equiprob}
\PP\big[ \bigwedge\limits_{\mbf{v}\in \{-1,+1\}^n} P(\mbf{v}) \leq P(\vs)\big] \geq 0.5,
\end{equation}
where $\vs$ is the $\mbf{v}$ component of the mode of $\Pvh$. 
\end{definition}

\begin{remark}
First, we recall that $\vs$ is the $\mbf{v}$ component of the joint distribution $\Pvh$. If $\vs$ is also the mode of the marginal distribution $\Pv$, or $\vs = \vd$, then clearly we require that $\Pv \leq P(\vd) = P(\vs)$, for all $\mbf{v}$ configurations. In order to weaken the condition of exact modal correspondence, we simply require that the probability of the inequality, $\Pv \leq P(\vs)$, holds for all $\mbf{v}$ to be greater than some arbitrary value, which we chose to be $0.5$ here. 
\end{remark}

\subsection{Trivial Cases}

There are two cases where proving the modal correspondence is trivial; the two cases occur at the beginning and end of the pre-training respectively. At the beginning of the training, the frustration index is small for the RBM, and the system is trivially ferromagnetic. At end of the training, the magnitude of the weights are large, and the nodal activation of the hidden layer is almost certain. 

\subsubsection{Small Frustration}
\label{smallf}

If the frustration index is small, we can state the following.

\begin{proposition}
$\argv \Qv = \argv \Pv$ for an RBM with zero frustration.
\end{proposition}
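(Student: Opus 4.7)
The plan is to exploit the gauge invariance established earlier in Section~\ref{gauge} and reduce the claim to a coordinate-wise maximization argument. First, I would apply the gauge map $G$ so that the ground state of the RBM is $\{\mbf{v}^*,\mbf{h}^*\}=\{\mbf{+1},\mbf{+1}\}$; since the argmax of $Q$ and of $P$ are equivariant under this transformation, proving the proposition in the gauged coordinates is equivalent to the original statement. The key observation is that a gauged RBM with zero frustration has $W_{ij}\geq 0$ for all $i,j$. Indeed, by definition of the frustration index $f=\tfrac{1}{2}\bigl(\sum_{ij}|W_{ij}|-\sum_{ij}W_{ij}\bigr)/\sum_{ij}|W_{ij}|$, the condition $f=0$ forces $\sum_{ij}|W_{ij}|=\sum_{ij}W_{ij}$, which in turn forces every individual $W_{ij}$ to be non-negative.

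Given this sign structure, I would then analyze the angle variables $\theta_j(\mbf{v})=\sum_i v_i W_{ij}$. For each fixed $j$, the triangle inequality gives
\begin{equation*}
|\theta_j(\mbf{v})|=\Bigl|\sum_i v_i W_{ij}\Bigr|\leq \sum_i |v_i|\, W_{ij}=\sum_i W_{ij}=\theta_j(\mbf{+1}),
\end{equation*}
so the configuration $\mbf{v}=\mbf{+1}$ simultaneously maximizes $|\theta_j|$ across all hidden indices $j$. By Lemma~\ref{qv}, $Q(\mbf{v})=\exp\bigl(\sum_j |\theta_j(\mbf{v})|\bigr)$ and since $\exp$ is monotone, this immediately gives $\argv Q(\mbf{v})=\mbf{+1}$, which is consistent with $\mbf{v}^*=\mbf{+1}$.

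For the marginal side, I would invoke the fact that $\cosh$ is an even, strictly increasing function of $|\theta_j|$. Hence for each $j$ and every $\mbf{v}$,
\begin{equation*}
2\cosh\bigl(\theta_j(\mbf{v})\bigr)\leq 2\cosh\bigl(\theta_j(\mbf{+1})\bigr),
\end{equation*}
and taking the product over $j$ (all factors are positive) yields $P(\mbf{v})\leq P(\mbf{+1})$ for every $\mbf{v}$. Therefore $\argv P(\mbf{v})=\mbf{+1}=\argv Q(\mbf{v})$, establishing the proposition.

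The argument is essentially clean because positivity of the weights decouples the maximization across hidden indices: each factor of the product $\prod_j 2\cosh(\theta_j)$ and each term of the sum $\sum_j |\theta_j|$ is maximized by the same visible configuration. The only place one must be mildly careful is the handling of ties and the global $\mbb{Z}_2$ symmetry $\mbf{v}\mapsto -\mbf{v}$: both $Q$ and $P$ are invariant under global spin flip, so the argmax is really an orbit $\{\mbf{+1},\mbf{-1}\}$ and the equality of argmaxes should be read modulo this symmetry, which is consistent with the pseudometric discussion in Section~\ref{met}.
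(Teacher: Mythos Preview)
Your proof is correct and follows essentially the same route as the paper: gauge the RBM so that zero frustration forces all $W_{ij}\geq 0$, then use the coordinate-wise bound $|\theta_j(\mbf{v})|\leq \theta_j(\mbf{+1})$ together with the monotonicity of $\cosh$ and $\exp$ to conclude that $\mbf{+1}$ maximizes both $P$ and $Q$. Your write-up is simply more explicit than the paper's (you spell out why $f=0$ forces $W_{ij}\geq 0$ and address the $\mbb{Z}_2$ tie), but there is no substantive difference in approach.
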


\begin{proof}
We look at the gauged RBM where all weight elements are non-negative. Recall that the ground state of a gauged RBM is $\mbf{+1}$, then we have $\argvh \Pvh = \mbf{+1}$, which implies $\argv \Qv = \mbf{+1}$. Note that $\Pv = \prod_j 2\cosh(\ta_j) = \prod_j 2\cosh(\sum_i W_{ij}v_i) \leq \prod_j 2\cosh(\sum_i W_{ij}) = P(\mbf{+1})$, where the inequality comes from the fact that $W_{ij}\geq 0$ and $v_i\in \{-1,1\}$, so we have $\argv \Pv = \mbf{+1}$ as well. The proposition is then shown.
\end{proof}

\begin{remark}
Note that this proposition implies directly modal correspondence as defined in definition \ref{defcor} in the absolute sense.
\end{remark}

\subsubsection{Large Weights}
\label{largew}

Near the end of the RBM training, the magnitude of the weights are usually very large (thus also the magnitude of the elements of $\bs{\ta}$), and the activation of the hidden nodes becomes increasingly certain. Intuitively speaking, this means that given any visible configuration, there is only one dominant hidden configuration corresponding to it. Therefore, the marginal distribution $\pv$ (which involves the sum over all hidden configurations) can be effectively approximated with the joint distribution $\pvh$. We formalize this argument as follows:

\begin{proposition}
\label{largeta}
Given an $n\times m$ weight matrix, $\mbf{W}$, with the joint mode $\mbf{v}$ satisfying
\begin{equation*}
\forall j \in [[1,m]], \qquad
| \sum_i W_{ij}v_i | \neq 0,
\end{equation*}
and the ground state is not degenerate. Then $\exists M>0$, such that for an RBM with the weight matrix, $M \mbf{W}$, the following is true
\begin{equation*}
\argv \Qv = \argv \Pv.
\end{equation*}
\end{proposition}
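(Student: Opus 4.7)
The plan is to express both $Q(\mathbf{v})$ and $P(\mathbf{v})$ in a form that makes the effect of the rescaling $\mathbf{W} \to M\mathbf{W}$ transparent, and then show that the dominant factor under this scaling is exactly $Q$, with a correction that is uniformly bounded in $\mathbf{v}$ and in $M$. Using Lemma \ref{qv} together with the algebraic identity $2\cosh(x) = e^{|x|}\bigl(1+e^{-2|x|}\bigr)$, and writing $\theta_j(\mathbf{v}) = \sum_i W_{ij}v_i$, the rescaled objects factor as
\begin{equation*}
Q_M(\mathbf{v}) = e^{M\sum_j|\theta_j(\mathbf{v})|}, \qquad P_M(\mathbf{v}) = Q_M(\mathbf{v}) \prod_j \bigl(1 + e^{-2M|\theta_j(\mathbf{v})|}\bigr).
\end{equation*}
The correction factor lies in $[1,2^m]$ for every $\mathbf{v}$ and every $M\geq 0$, so on a log scale it can perturb $\log P_M$ by at most $m\log 2$ relative to $\log Q_M$.

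Second, I would use both hypotheses to extract a strictly positive gap for $Q_M$. The non-vanishing condition $|\theta_j(\vs)|\neq 0$ for all $j$ forces $\mathbf{h}^{\bigstar} = \mathrm{sign}(\bs{\theta}(\vs))$ to be uniquely determined given $\vs$, so non-degeneracy of the joint ground state is equivalent to strict maximality of $\sum_j|\theta_j(\mathbf{v})|$ at $\vs$. Because $\{-1,+1\}^n$ is finite, the gap
\begin{equation*}
\Delta := \sum_j|\theta_j(\vs)| - \max_{\mathbf{v}\neq\vs}\sum_j|\theta_j(\mathbf{v})|
\end{equation*}
is attained and strictly positive.

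Combining the two steps on the log scale then gives, for every $\mathbf{v}\neq\vs$,
\begin{equation*}
\log P_M(\vs) - \log P_M(\mathbf{v}) \;\geq\; M\Delta - m\log 2,
\end{equation*}
so any $M > m\log 2/\Delta$ makes this difference strictly positive and thereby forces $\argv P_M(\mathbf{v}) = \vs = \argv Q_M(\mathbf{v})$. The substantive step, and the one I would expect to require the most care, is the bookkeeping in the second paragraph: without the non-vanishing hypothesis one could have joint-ground-state degeneracy that originates purely from the hidden layer, and the visible-level gap $\Delta$ could collapse to zero, so both assumptions must be invoked together to secure $\Delta>0$. Everything else is a routine exponential-versus-constant comparison.
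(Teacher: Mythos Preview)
Your proof is correct and follows essentially the same strategy as the paper's: both extract a strictly positive gap $\Delta$ (the paper calls it $\epsilon$) from non-degeneracy, use that $P_M/Q_M$ is a bounded correction because $2\cosh(x)/e^{|x|}\to 1$, and then choose $M$ large enough for the linearly growing term $M\Delta$ to dominate. Your version is somewhat cleaner---the identity $2\cosh(x)=e^{|x|}(1+e^{-2|x|})$ gives the uniform bound $[1,2^m]$ directly and yields an explicit threshold $M>m\log 2/\Delta$, whereas the paper argues by contradiction via a $(1+\delta)$-approximation that needs the arguments $M|\theta_j|$ to be large.
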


\begin{proof}
We look at the gauged RBM so that the ground state is $\mbf{+1}$, then we set
\begin{equation*}
\theta_j = \sum_{ij}W_{ij} > 0.
\end{equation*}
Let $\mbf{v'}$ be the visible component of any other state, then we denote
\begin{equation*}
\theta'_j = \sum_{ij}W_{ij}v'_i.
\end{equation*}
Then the following must be true
\begin{equation*}
\exists \epsilon > 0, \qquad
\sum_j |\ta_j| - \sum_{j} |\ta'_j| = \epsilon.
\end{equation*}

Recall from proposition \ref{qv} that 
\begin{equation*}
Q(\bs{\ta}) = \prod_j \exp(|\ta_j|).
\end{equation*}
Furthermore, we can write the marginal distribution as
\begin{equation*}
P(\bs{\ta}) = \prod_j 2\cosh(|\ta_j|).
\end{equation*}
Note that $\lim_{x\to\infty} = \frac{2 \cosh(x)}{\exp(x)} = 1$. This implies that $\forall \delta > 0$, $\exists x > 0$ such that $2\cosh(x) < (1+\delta)\exp(x)$. If we assume that the proposition is false, then we can set $\delta' < \exp(\epsilon/m)-1$ and choose $M > 0$ such that
\begin{equation*}
\begin{split}
& (1+\delta')^m \prod_j \exp(M |\ta'_j|) > \prod_j 2 \cosh(M |\ta'_j|) \geq \prod_j 2 \cosh(M |\ta_j| ) > \prod_j \exp(M |\ta_j|) \\
\implies \,
& m \log(1+\delta') + \sum_j M | \ta'_j| > \sum_j M | \ta_j|
\, \implies \,
\epsilon > \epsilon,
\end{split}
\end{equation*}
a contradiction. Therefore, the proposition must be true.
\end{proof}


\section{Showing Modal Correspondence}

We have shown in the previous section that the modes of the joint and marginal PMF of the RBM correspond absolutely under two trivial cases: large weights and small frustration. The remaining case where the weights are small and the frustration is large is highly non-trivial, and we dedicate this entire section to showing, in the probabilistic sense, the modal correspondence as defined in definition \ref{defcor}. The problem of showing modal correspondence can be reduced to analyzing the value Gaussian integrals over simplexes of varying sizes. Before we tackle this problem, we first formalize the notion of a {\it random} RBM.

\subsection{Random RBM}

\begin{definition}[Random RBM]
\label{rand}
A {\bf Random RBM} is an RBM with a weight matrix, $\mbf{W}$, whose elements are iid normal variables with mean $\mu = 0$ and standard deviation $\sigma$. Furthermore, the configuration of the visible layer is sampled uniformly from $\{-1,+1\}^n$.
\end{definition}

\begin{lemma}
Given a random RBM, $\{ \mbf{v}, \mbf{W} \}$, the angle variables,
\begin{equation*}
\bs{\ta} = \mbf{v}\cdot \mbf{W},
\end{equation*}
are iid normal variables with mean $0$ and variance $\sit^2 = n\sigma^2$.
\end{lemma}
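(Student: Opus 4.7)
The plan is to compute the joint distribution of $(\theta_1, \ldots, \theta_m)$ by conditioning on $\mbf{v}$ and then showing that the conditional law does not depend on $\mbf{v}$, so the conditioning can be removed. First, I would fix an arbitrary $\mbf{v} \in \{-1,+1\}^n$ and expand $\theta_j = \sum_{i=1}^n v_i W_{ij}$. Since each $W_{ij} \sim \mathcal{N}(0,\sigma^2)$ and $v_i \in \{-1,+1\}$, the symmetry of the centered Gaussian around zero gives $v_i W_{ij} \sim \mathcal{N}(0,\sigma^2)$, and independence of the summands (different $i$) is inherited from independence of the entries of $\mbf{W}$. Summing $n$ such iid terms yields $\theta_j \mid \mbf{v} \sim \mathcal{N}(0, n\sigma^2)$, giving the claimed marginal.

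Next, I would establish joint independence across $j$. Conditional on $\mbf{v}$, the random variable $\theta_j$ is a function only of the $j$-th column $(W_{1j}, \ldots, W_{nj})$ of $\mbf{W}$. Since all entries of $\mbf{W}$ are mutually independent, the columns are independent, so $\{\theta_j\}_{j=1}^m$ are conditionally independent given $\mbf{v}$. Combined with the marginal computation above, the conditional joint law is a product of $m$ copies of $\mathcal{N}(0, n\sigma^2)$, which is free of $\mbf{v}$.

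Finally, I would drop the conditioning: because the conditional distribution of $\bs{\ta}$ given $\mbf{v}$ does not depend on $\mbf{v}$, it coincides with the unconditional distribution. Equivalently, one can check the joint characteristic function in one shot,
\begin{equation*}
\E\!\left[\exp\!\Big(i\sum_j t_j \theta_j\Big)\right] = \E_{\mbf{v}}\!\left[\prod_j \exp\!\Big(-\tfrac{1}{2} t_j^2\, n\sigma^2\Big)\right] = \prod_j \exp\!\Big(-\tfrac{1}{2} t_j^2\, n\sigma^2\Big),
\end{equation*}
which is the characteristic function of iid $\mathcal{N}(0, n\sigma^2)$ variables and thus certifies both Gaussianity and independence simultaneously.

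There is essentially no obstacle here; the only conceptual point is that averaging over $\mbf{v}$ is trivial because the Gaussian law is invariant under sign flips of its argument, so the randomness in $\mbf{v}$ never interacts with the randomness in $\mbf{W}$ in a non-trivial way. The result would fail for a non-symmetric distribution on the weights, which highlights why the Gaussian initialization in Definition \ref{rand} is what makes the analysis in the subsequent sections tractable.
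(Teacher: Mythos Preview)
Your proof is correct, but it proceeds along a different route than the paper. The paper never conditions on $\mbf{v}$; instead it works with the unconditional distribution of the product $v_iW_{ij}$ directly, verifying via a CDF calculation that $v_iW_{ij}$ has the same law as $W_{ij}$, then argues that any linear combination $\sum_j c_j\theta_j$ is again a sum of such products and hence normal (this is the Cram\'er--Wold step establishing joint Gaussianity), and finally computes $\Cov(\theta_{j_1},\theta_{j_2})=0$ to deduce independence from uncorrelatedness within a multivariate normal family.

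Your conditioning argument sidesteps most of this machinery: once $\mbf{v}$ is frozen, each $\theta_j$ is a deterministic linear functional of the $j$-th column of $\mbf{W}$, so Gaussianity and cross-column independence are immediate, and the observation that the resulting conditional law is free of $\mbf{v}$ lets you drop the conditioning without any further work. The characteristic function computation you give at the end is a compact certificate of the same fact. What the paper's approach buys is that it makes explicit why the sign-flip symmetry of the centered Gaussian is the crucial ingredient (through the CDF identity $P(W_{ij}\le z)=P(-W_{ij}\le z)$), whereas in your version this symmetry is invoked but somewhat hidden inside the statement ``$v_iW_{ij}\sim\mathcal{N}(0,\sigma^2)$.'' Your final remark that the argument fails for non-symmetric weight distributions captures exactly this point.
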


\begin{proof}
This is a three stage proof. First, we have to show the product $W_{ij}v_i$ is a random normal variable, so the elements of $\bs{\ta}$ are also random normal variables. Second, we show that the probability distribution function (pdf) of $\bs{\ta}$ is a multivariate normal distribution. Finally, we show that the elements of $\bs{\ta}$ are uncorrelated, thus implying that they are independent. \\

To show that $W_{ij}v_i$ is a random normal variable, we find the cumulative distribution function (CDF) of this product, and show that it is the CDF of a normal distribution. The CDF of the product is given by
\begin{equation*}
\begin{split}
& P(W_{ij}v_i <= z) \\
= & P(W_{ij} <= z)P(v_i = 1) + P(W_{ij} >= -z)P(v_i = -1) \\
= & \frac{1}{2}(P(W_{ij} <= z) + P(W_{ij} >= -z)) \\
= & \frac{1}{2}(2 P(W_{ij} <= z)) \\
= & P(W_{ij} <= z),
\end{split}
\end{equation*}
which is simply the CDF of $W_{ij}$. Note that we have exploited the fact that the PDF of $W_{ij}$ is even. Therefore, $\ta_j = \sum_i W_{ij}v_i$ is the sum of $n$ random normal variables, resulting in another random normal variable $\mathcal{N}(0,n \sigma^2)$. \\

To show that the pdf of $\bs{\ta}$ is a multivariate normal distribution, it is sufficient to show that any linear combination of the angle variables is a normal variable. Let the linear combination be
\begin{equation*}
\sum_j c_j \ta_j = \sum_j c_j (\sum_i W_{ij}v_i) = \sum_i v_i \big( \sum_j W_{ij}c_j \big).
\end{equation*}
If we denote $\phi_i = \sum_j W_{ij}c_j$, then the linear combination can be expresed as $\sum_i v_i \phi_i$. Note that we can show that $v_i\phi_i$ is a random normal variable by the same argument as above, then $\sum_i v_i\phi_i$ must be a random normal variable as well, as it is the sum of independent normal variables. Therefore, $\bs{\ta}$ is a multivariate normal distribution. \\

Finally, since the PDF of $\bs{\ta}$ is a multivariate normal distribution, to show that $\bs{\ta}$ are independent random normal variables, it is sufficient to show that any two elements of $\bs{\ta}$ are uncorrelated. For $j_1\neq j_2$, we have 
\begin{equation*}
\begin{split}
 \Cov(\ta_{j_1},\ta_{j_2}) 
= & \Cov(\sum_i W_{ij_1}v_i, \sum_i W_{ij_2}v_i) 
= \E(\sum_{i_1,i_2} W_{i_1j_1}W_{i_2j_2}v_{i_1}v_{i_2}) \\
= & \sum_{i_1,i_2} \E(W_{i_1j_1}W_{i_2j_2})\E(v_{i_1}v_{i_2}) 
= \sum_i \E(W_{ij_1}W_{ij_2}) 
= \sum_i \E(W_{ij_1})\E(W_{ij_2}) 
= \, 0,
\end{split}
\end{equation*}
where we have used the fact that $\E(v_{i_1}v_{i_2}) = \delta_{i_1i_2}$. The lemma is then proved.
\end{proof}

\begin{remark}
An important consequence of this lemma is that we can parameterize a random RBM with the angle variables $\bs{\ta}$, as the distributions of $\mbf{v}$ and $\mbf{W}$ are fully captured as the distribution of $\bsta$ as iid normal variables. \\
\end{remark}

As an RBM with large weights trivially satisfies the modal correspondence condition (see proposition \ref{largeta}), we can assume the weights are small for the sake of non-triviality, and make the following approximation for $\bsta$:
\begin{equation*}
P(\bsta) = 
\prod_j 2\cosh(\ta_j) 
\approx \prod_j (2 + \ta_j^2) 
\approx 2^m + 2^{m-1}(\sum_j \ta_j^2) 
\rightarrow \sum_j \theta_j^2,
\end{equation*}
where the right arrow in the last line denotes an affine transformation which preserves the ordering of the probability masses. Similarly, we approximate $\Qv$ as follows:
\begin{equation*}
Q(\bsta) = \exp(\sum_j |\ta_j|)\approx 1 + \sum_j |\ta_j| 
\rightarrow \sum_j |\ta_j|.
\end{equation*}

\subsection{Simplex Condition}

To show modal correspondence, it is convenient for us to fix $Q(\bsta)$, and analyze the conditional distribution of $\bsta$. In particularly, we wish to show that if $Q(\bsta)$ is large, then the conditional expected value of $P(\bsta)$ will also be large. First, we denote the conditional distribution of $\bsta$ under a fixed $\Qta$ as $f(\bs{\ta} \cond \Qta = \alpha)$. Recall that $\Qta = \sum_j |\ta_j|$ so the level set of $\Qta$ are composed of simplexes, one in each quadrant. Note that $\bsta$ are iid normal variables, so the PDF is spherically symmetric. Furthermore, $\bsta^2$ is also spherically symmetric. This means that all moments of $\Pta$ are invariant if we rewrite the condition as
\begin{equation}
\label{cond}
\big[ \Qta = \alpha \big] \,\land\, \big[ \bsta \geq \mbf{0} \big].
\end{equation}

\begin{lemma}
\label{1st}
The following two conditional distributions are equivalent.
\begin{equation*}
f\big( \bsta^2 \cond \big[ \Qta = \alpha \big] \,\land\, \big[ \bsta \geq \mbf{0} \big] \big) 
= 
f\big( \bsta^2 \cond \sum_j |\ta_j| = \alpha \big).
\end{equation*}
\end{lemma}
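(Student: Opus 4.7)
The plan is to exploit a sign-flip symmetry. The joint density of $\bsta$ is a product of $m$ centered Gaussians and is therefore invariant under any coordinate-wise sign flip $\phi_\epsilon : (\theta_1,\ldots,\theta_m)\mapsto(\epsilon_1\theta_1,\ldots,\epsilon_m\theta_m)$ with $\epsilon\in\{-1,+1\}^m$. Both quantities that appear in the lemma, the scalar $\Qta=\sum_j|\theta_j|$ and the componentwise square $\bsta^2$, are fixed by every such $\phi_\epsilon$. These three facts are essentially the only ingredients I would use.

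I would then decompose the level set $L_\alpha = \{\bsta:\Qta=\alpha\}$ as a disjoint union, over the $2^m$ closed orthants, of the pieces $L_\alpha^\epsilon = \phi_\epsilon(L_\alpha^+)$, where $L_\alpha^+ = L_\alpha\cap\{\bsta\geq\mbf{0}\}$. Since $\phi_\epsilon$ preserves both the Gaussian law and the vector $\bsta^2$, for any measurable $A$ in the range of $\bsta^2$,
\begin{equation*}
\PP\bigl(\bsta^2\in A,\ \bsta\in L_\alpha^\epsilon\bigr) \;=\; \PP\bigl(\bsta^2\in A,\ \bsta\in L_\alpha^+\bigr)
\end{equation*}
for every $\epsilon$. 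Summing over the $2^m$ orthants,
\begin{equation*}
\PP\bigl(\bsta^2\in A,\ \Qta=\alpha\bigr) \;=\; 2^m\,\PP\bigl(\bsta^2\in A,\ \Qta=\alpha,\ \bsta\geq\mbf{0}\bigr),
\end{equation*}
and taking $A$ to be the full range yields the identical factor $2^m$ for the normalizers. Forming the conditional densities by taking ratios, the $2^m$ cancels and the claimed equality of the two conditional laws drops out immediately.

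The one technicality, and the only place that warrants real care, is that $\{\Qta=\alpha\}$ is a measure-zero event in $\mbb{R}^m$, so the conditional distributions must be understood via disintegration against surface measure on the $(m-1)$-dimensional simplicial level set $L_\alpha$. This is the main obstacle, but it is purely a bookkeeping issue: each $\phi_\epsilon$ restricts to an isometry of $L_\alpha$ and leaves the ambient Gaussian density invariant, so the regular conditional law of $\bsta^2$ given $\Qta=\alpha$ is a uniform mixture over the $2^m$ orthant pieces of the conditional law given $\Qta=\alpha$ together with $\bsta\geq\mbf{0}$. Because $\bsta^2$ is constant on every $\phi_\epsilon$-orbit, this mixture collapses to a single common law, which is precisely the assertion of the lemma.
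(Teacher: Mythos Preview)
Your argument is correct and is essentially the same symmetry argument the paper has in mind; the paper's own proof is simply ``Omitted. Follows directly from the spherical symmetry of the PDF of $\bsta$.'' If anything, you are more precise: the full rotational (spherical) symmetry of the iid Gaussian is not what maps the orthant pieces of the level set $\{\sum_j|\theta_j|=\alpha\}$ onto one another---it is exactly the coordinate-wise sign-flip subgroup you invoke, and your disintegration remark handles the measure-zero conditioning that the paper leaves implicit.
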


\begin{proof}
Omitted. Follows directly from the spherical symmetry of the PDF of $\bsta$.
\end{proof}

The graph of condition (\ref{cond}) is a regular simplex of length $\sqrt{2}\alpha$ and dimension $m-1$ in the first quadrant, which we can denote as $\simp$, and we can write the conditional PDF as $f(\bs{\ta} \cond \simp)$. It is convenient for us to apply an orthogonal transformation to $\bs{\ta}$, and we denote the new angles as $\bs{\phi} = T\bs{\ta}$. Note that the new angles are still independent normal random variables, since an orthogonal transformation preserves the independence of normal variables. The orthogonal transformation is chosen such that $\hat{\phi}_1$ points from the origin to the centroid of the simplex. We denote $\bphi$ as the components of $\bs{\varphi}$ other than $\phi_1$, meaning that $\bs{\phi} = ( \phi_1, \bs{\varphi} )$.

\begin{lemma}
Let $\mbf{n} = \frac{\alpha}{\sqrt{m}}\hat{\phi}_1$, then
\begin{equation*}
\forall \bsta \in \simp, \, \quad
\bsta^2 = (\mbf{n}^2 + \bs{\varphi}^2).
\end{equation*}
\end{lemma}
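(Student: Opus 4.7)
The plan is to exploit two facts: $T$ is orthogonal (so it preserves squared norms) and the simplex $\simp$ lies in a single hyperplane whose unit normal is, by construction, $\hat{\phi}_1$. Once those observations are laid out, the identity is almost automatic; the only real work is to identify the value of the $\phi_1$ coordinate on the simplex.

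First, I would use orthogonality of $T$ to write $\bsta^2 = \bs{\phi}^2 = \phi_1^2 + \bs{\varphi}^2$ for every $\bsta \in \simp$, since $\bs{\phi} = T\bsta$ and $\bs{\phi} = (\phi_1, \bphi)$. This already reduces the lemma to showing that $\phi_1^2 = |\mbf{n}|^2 = \alpha^2/m$ identically on the simplex. The constant value of $\phi_1$ on $\simp$ is the geometric heart of the statement.

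Next, I would observe that $\simp$ is, by its very definition, the subset of the hyperplane $\{\bsta : \sum_j \ta_j = \alpha\}$ lying in the first quadrant (recall we are in the regime $\bsta \geq \mbf{0}$, so $|\ta_j| = \ta_j$). The centroid of this simplex is $(\alpha/m)(1,\ldots,1)$, which has norm $\alpha/\sqrt{m}$, and the unit vector from the origin to the centroid is $\hat{\phi}_1 = (1/\sqrt{m})(1,\ldots,1)^T$ — exactly the axis chosen for the orthogonal transformation $T$. Therefore
\begin{equation*}
\phi_1 = \hat{\phi}_1 \cdot \bsta = \frac{1}{\sqrt{m}} \sum_{j=1}^m \ta_j = \frac{\alpha}{\sqrt{m}}
\end{equation*}
for every $\bsta \in \simp$, where the last equality uses the defining relation of the simplex.

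Combining the two steps gives $\phi_1^2 = \alpha^2/m = |\mbf{n}|^2$, so that $\bsta^2 = \mbf{n}^2 + \bs{\varphi}^2$ as claimed. There is really no obstacle here — the proof is essentially a coordinate computation — but if anything requires care, it is being explicit about the convention that $\mbf{n}^2$ denotes $|\mbf{n}|^2$ and that $\hat{\phi}_1$ is chosen to coincide with the unit normal to the affine hull of $\simp$, which is what makes $\phi_1$ constant on the simplex and $\bs{\varphi}$ the free coordinates parametrizing it.
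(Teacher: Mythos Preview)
Your proof is correct and follows essentially the same approach as the paper: both rely on the fact that $\mbf{n}$ is the foot of the perpendicular from the origin to the hyperplane containing $\simp$, so that $\bsta$ decomposes orthogonally into $\mbf{n}$ plus an in-plane component of squared norm $\bs{\varphi}^2$. The paper states this as the orthogonality $(\bsta - \mbf{n})\cdot\mbf{n} = 0$ and leaves the Pythagorean conclusion implicit, while you carry out the equivalent coordinate computation $\phi_1 = \hat{\phi}_1\cdot\bsta = \alpha/\sqrt{m}$ explicitly; the two are the same argument.
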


\begin{proof}
This can be shown by realizing that $\mbf{n}$ is the displacement of the centroid from the origin, which is perpendicular to the $m-1$ hyperplane the simplex is in. In other words
\begin{equation*}
\forall \bsta \in \simp, \quad
(\bsta - \mbf{n}) \cdot \mbf{n} = 0.
\end{equation*}
\end{proof}

\begin{remark}
This lemma allows us to express the condition distribution of $\bsta$ in terms of $\bs{\varphi}$.
\end{remark}

\begin{lemma}
Let $\bs{\varphi}$ be iid normal variables with variance $\sit^2$, then
\begin{equation*}
f(\bsta \cond \simp) = \left[ f(\bs{\varphi}) \middle/ \int_{\simp} f(\bs{\varphi}) \right] = f(\bs{\varphi} \cond \simp).
\end{equation*}
\end{lemma}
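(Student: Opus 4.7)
The plan is to leverage the previous lemma's decomposition $\bsta^2 = \mbf{n}^2 + \bs{\varphi}^2$ on $\simp$ together with the spherical symmetry (and separability) of the iid normal density. First I would change variables from $\bsta$ to the orthogonally rotated coordinates $\bs{\phi} = T\bsta = (\phi_1, \bs{\varphi})$, where $\hat{\phi}_1$ is directed from the origin to the centroid of $\simp$. Because $T$ is orthogonal, $\bs{\phi}$ is again a vector of iid $\mathcal{N}(0, \sit^2)$ variables, and in particular its joint density factors as $f(\phi_1)\,f(\bs{\varphi})$. This is the key structural identity that makes the conditioning tractable.

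Next I would observe that the simplex $\simp$ lies entirely in the affine hyperplane $\{\phi_1 = \alpha/\sqrt{m}\}$, so $\phi_1$ is constant on $\simp$ while $\bs{\varphi}$ provides an isometric parametrization of the simplex (the orthogonal transformation has unit Jacobian, so the induced Lebesgue measure on $\simp$ is exactly $d\bs{\varphi}$ on the corresponding region in $\rsp$). Combining this with the previous lemma's formula $|\bsta|^2 = |\mbf{n}|^2 + |\bs{\varphi}|^2$, the full density $f(\bsta) = (2\pi\sit^2)^{-m/2}\exp(-|\bsta|^2/2\sit^2)$ restricted to $\simp$ becomes
\begin{equation*}
f(\bsta)\big|_{\simp} = (2\pi\sit^2)^{-m/2}\exp\!\bigl(-\tfrac{\alpha^2}{2m\sit^2}\bigr)\exp\!\bigl(-\tfrac{|\bs{\varphi}|^2}{2\sit^2}\bigr),
\end{equation*}
i.e.\ a constant multiple of $f(\bs{\varphi})$.

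Finally I would apply the standard conditioning formula on a level set: the conditional density of $\bsta$ given $\bsta \in \simp$ is obtained by restricting $f(\bsta)$ to the simplex as a surface measure and renormalizing by the total mass. Since $T$ preserves the Lebesgue measure and $\phi_1$ is fixed on $\simp$, the normalization constant $(2\pi\sit^2)^{-m/2}\exp(-\alpha^2/2m\sit^2)$ cancels between numerator and denominator, yielding
\begin{equation*}
f(\bsta \cond \simp) = \frac{f(\bs{\varphi})}{\int_{\simp} f(\bs{\varphi})} = f(\bs{\varphi} \cond \simp),
\end{equation*}
as claimed.

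The main obstacle is bookkeeping rather than analysis: one must be careful that conditioning on the measure-zero set $\simp$ is well-defined (via the disintegration of Lebesgue measure along the orthogonal decomposition induced by $T$) and that the parametrization by $\bs{\varphi}$ is an isometry so no Jacobian factor intrudes. Both of these follow at once from the orthogonality of $T$, which is why the rotation to the centroid axis in the preceding setup does all the real work.
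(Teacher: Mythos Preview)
Your proposal is correct and follows the same approach as the paper: the paper's proof simply observes that the orthogonal transformation $T$ preserves iid normality of the coordinates, and that restricting an iid normal density to the hyperplane $\phi_1 = \alpha/\sqrt{m}$ yields (up to normalization) an iid normal density in the remaining coordinates $\bs{\varphi}$. Your writeup expands this into a fully detailed argument, carefully tracking the Jacobian and the disintegration, but the underlying idea is identical.
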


\begin{proof}
This can be shown by realizing that if $\bsta$ are iid normal variables, then $\bs{\phi}$ must also be iid normal variables. The intersection of the PDF of iid normal variables and a hyperplane is also a PDF of iid normal variables (with one less dimension).
\end{proof}

%
%

The conditional expected value of $\Pta$ can then be expressed as
\begin{equation}
\label{epv}
\E\big( \Pta \cond \Qta = \alpha \big) 
= \E\big( \sum_j |\ta_j|^2 \cond \simp \big)
= \frac{\alpha^2}{m} + \E( \bs{\varphi}^2 \cond \simp ).
\end{equation}
Similarly, the conditional variance can be expressed as
\begin{equation}
\label{varpv}
\Var(\Pta \cond \Qta = \alpha) = \Var( \bs{\varphi}^2 \cond \simp ).
\end{equation}
Note that the $k$-th moment of $\bs{\varphi}^2$ conditioned on the simplex is
\begin{equation*}
\E_{\simp}( \bs{\varphi}^{2k} ) = \left[ \int_{\simp} f(\bs{\varphi}) \bs{\varphi}^{2k} \middle/ \int_{\simp} f(\bs{\varphi}) \right].
\end{equation*}
To lessen the burden of notation, we denote the following Gaussian integral
\begin{equation}
\label{J}
\begin{split}
J(\sigma, \alpha, k) 
= & \int_{\simp} d\bs{\varphi} \, \bs{\varphi}^{2k} \exp\Big[ -\frac{\bs{\varphi}^2}{2\sigma^2} \Big] \\
= & \sqrt{m}\int_0^{\infty} d\bsta\, \delta( \sum_j \ta_j - \alpha ) \bs{\varphi}^{2k} \exp\Big[ - \frac{(\bsta - \mbf{n})^2}{2\sigma^2} \Big],
\end{split}
\end{equation}
and we note that
\begin{equation*}
J(\sigma,\alpha,k) 
= \Big[ \frac{\partial}{\partial\big( - \frac{1}{2 \sigma^2} \big)} \Big]^{2k} J(\sigma,\alpha),
\end{equation*}
where the last argument of $J$ is assumed $0$ in its absence. We can then write
\begin{equation*}
\E_{\simp}(\bs{\varphi}^{2k}) = \frac{1}{J(\sit, \alpha, 0)} 
\Big[ \frac{\partial}{ \partial \big( - \frac{1}{2 \sit^2} \big)} \Big]^{2k} J(\sit, \alpha, 0).
\end{equation*}

Before we proceed to evaluate this integral, we first recall that the size of the simplex is given as $\alpha = \sum_k |\ta_j|$, which means a ``typical" value of $\alpha$ is dependent on the variance, $\sit^2$. In fact, we note that $|\ta_j|$ is a half normal variation with mean $\sqrt{\frac{2}{\pi}}\sit$, then a typical size of the simplex would be
\begin{equation*}
\alpha = \sqrt{\frac{2}{\pi}}m\sit.
\end{equation*}
Therefore, when we make approximating assumptions on the integral $J$, we have to keep in mind the scaling behavior of $\alpha$ with respect to $m$ and $\sit$.

\subsection{Gaussian Integral}

We now evaluate the integral $J$ (as defined in Eq.~(\ref{J})), which we use to derive asymptotic approximations for $\E_{\simp}(\bs{\varphi}^2)$ and $\Var_{\simp}(\bs{\varphi}^2)$ in the limit of large $m$.

\begin{proposition}
We denote
\begin{equation}
\label{k}
k' = \sqrt{\frac{2}{\pi}}\Big( \, 1 - 2\sqrt{\frac{\log 2}{\pi - 2}} + \pi \sqrt{ \frac{\log 2}{\pi - 2} } \, \Big).
\end{equation}
In the limit of large $m$, we have the following linearization of $\E_{\simp}(\bs{\varphi}^2)$ and $\Var_{\simp}(\bs{\varphi}^2)$ around $k'$:
\begin{equation*}
\begin{split}
\E_{\simp}(\bs{\varphi}^2) \approx \big[ 0.727 + 0.376(k - k') \big]\,m\sit^2, \\
\Var_{\simp}(\bs{\varphi}^2) \approx \big[0.887 + 0.813(k - k') \big] \,m\sit^4.
\end{split}
\end{equation*}
\end{proposition}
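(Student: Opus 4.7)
The plan is to compute the conditional moments in the large-$m$ limit via exponential tilting, which is the cleanest form of the saddle-point evaluation of $J(\sigma,\alpha,0)$. Writing $X_j = |\ta_j|$ with $\ta_j \sim \mathcal{N}(0,\sit^2)$ i.i.d., the measure $e^{-\bs{\varphi}^2/(2\sit^2)}d\bs{\varphi}$ restricted to $\simp$ is exactly the conditional law of $(X_1,\dots,X_m)$ given $\sum_j X_j = \alpha$. For large $m$, by the equivalence of ensembles this conditional law becomes i.i.d.\ sampling from the tilted distribution $P_{\lambda^*}(dx) \propto e^{-\lambda^* x} f_{|\ta|}(x)\, dx$, where $\lambda^*$ is the unique tilt with $\E_{\lambda^*}[X] = \alpha/m$. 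Since $\bs{\varphi}^2 = \sum_j \ta_j^2 - \alpha^2/m$ from the earlier geometric decomposition, the conditional moments of $\bs{\varphi}^2$ reduce to moments of $\sum_j X_j^2$ under the tilted i.i.d.\ law.

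First I would write down the explicit Laplace transform $L(\lambda) = e^{\lambda^2\sit^2/2}\erfc(\lambda\sit/\sqrt{2})$, derive the recursion $L'(\lambda) = \lambda\sit^2 L(\lambda) - \sit\sqrt{2/\pi}$, and iterate to obtain $L''$, $L'''$, $L^{(4)}$. In the dimensionless variables $y = \lambda^*\sit$ and $k = \alpha/(m\sit)$, the saddle-point equation $\E_{\lambda^*}[X] = -L'(\lambda^*)/L(\lambda^*) = k\sit$ reduces to $k + y = \sqrt{2/\pi}/L(y)$, and substituting this identity back into the higher derivatives produces remarkably clean polynomial expressions for the tilted moments:
\begin{align*}
\E_{\lambda^*}[X^2] &= \sit^2(1 - yk), \\
\E_{\lambda^*}[X^3] &= \sit^3[2k + y(ky - 1)], \\
\E_{\lambda^*}[X^4] &= \sit^4[3 + y^2 - yk(5 + y^2)].
\end{align*}
By the law of large numbers under the tilt, the conditional mean is
\begin{equation*}
\E_{\simp}[\bs{\varphi}^2] \approx m\,\E_{\lambda^*}[X^2] - \alpha^2/m = m\sit^2(1 - yk - k^2),
\end{equation*}
and the conditional bivariate CLT for $(\sum_j X_j, \sum_j X_j^2)$ under the tilted measure gives
\begin{equation*}
\Var_{\simp}[\bs{\varphi}^2] \approx m\Big[\Var_{\lambda^*}(X^2) - \Cov_{\lambda^*}(X,X^2)^2/\Var_{\lambda^*}(X)\Big],
\end{equation*}
each of the bracketed quantities assembled from the polynomial formulas above.

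For the linearization I would solve the saddle-point equation at $k = k'$, which numerically gives $y(k') \approx -1.325$, and substitute into the two expressions above to obtain the constants $0.727\,m\sit^2$ and $0.887\,m\sit^4$. The linear coefficients in $k - k'$ then come from implicit differentiation of the saddle-point equation, which yields $y'(k) = 1/[k(k+y) - 1]$, followed by the chain rule applied to $(1 - yk - k^2)$ and to the rational variance expression, producing the slopes $0.376$ and $0.813$. The main technical obstacle is the variance: one must justify that the conditioning correction $-\Cov_{\lambda^*}(X,X^2)^2/\Var_{\lambda^*}(X)$ is captured to leading order in $m$ by the bivariate CLT under the tilted measure (rather than an Edgeworth-type correction being needed), and that the subleading terms in the saddle-point expansion of $\log J$ do not contribute at the order of the quoted constants. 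Once these points are verified, the remaining work is mechanical algebra combined with numerical evaluation of $\erfc$ at the argument $-y(k')/\sqrt{2}$ determined by Eq.~(\ref{k}).
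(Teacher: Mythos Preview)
Your approach is correct and takes a genuinely different route from the paper. The paper evaluates $J(\sit,\alpha)$ by writing the delta constraint as a Fourier integral, factorizing into a product of one-dimensional integrals that each yield an $\erf$ term, expanding $\erf\big(\tfrac{ip'}{\sqrt{m}}+\tfrac{\lambda}{\sqrt{2}}\big)$ to second order in $p'/\sqrt{m}$, using the limit $\big(x+\tfrac{y}{\sqrt{n}}+\tfrac{z}{n}\big)^n\sim x^n\exp\big(\tfrac{z}{x}+\sqrt{n}\tfrac{y}{x}-\tfrac{1}{2}\tfrac{y^2}{x^2}\big)$, and then performing the remaining Gaussian $p'$-integral before differentiating in $\sit$ via Eq.~(\ref{moments}). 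Your exponential-tilting argument is the probabilistic face of the same saddle point: the tilt parameter $y=\lambda^\star\sit$ satisfying $k+y=\sqrt{2/\pi}/L(y)$ is exactly the stationary point of the paper's $p$-integral, and your bivariate CLT correction $-\Cov_{\lambda^\star}(X,X^2)^2/\Var_{\lambda^\star}(X)$ is what the paper recovers implicitly through the quadratic $p'^2$ term in the expansion. The advantage of your route is that the recursion $L'(\lambda)=\lambda\sit^2 L(\lambda)-\sit\sqrt{2/\pi}$ yields closed polynomial expressions for the tilted moments, so the constants $0.727$ and $0.887$ and the slopes $0.376$ and $0.813$ drop out of short algebra plus one numerical root-find for $y(k')$, whereas the paper must differentiate the full asymptotic form of $J$ in $\sit$ and only then linearize. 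The paper's route, on the other hand, keeps the normalization $J(\sit,\alpha,0)$ explicit throughout, which makes it marginally easier to verify that subleading saddle-point corrections are $O(1/m)$ and do not contaminate the leading $m\sit^2$ and $m\sit^4$ scalings---precisely the ``main technical obstacle'' you flag for your variance step.
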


\begin{proof}

We first note that
\begin{equation}
\label{moments}
\begin{split}
\E_{\simp} ( \bsv ) 
&= \frac{\sit^3 J'}{J} \\
\Var_{\simp}( \bs{\varphi}^2 )
&= \frac{3\sit^5 J' + \sit^6 J''}{J} - \frac{\sit^6 (J')^2}{J^2}.
\end{split}
\end{equation}
where the prime symbol denotes partial derivative of $J$ with respect to $\sit$. \\

We begin by transforming the integral $J(\alpha,\sit)$ in frequency space $p$
\begin{equation*}
\begin{split}
&J(\sigma, \alpha)=\frac{\sqrt{m}}{2\pi}\exp(-\frac{\alpha^2}{2m\sit^2}) \times \\
&\int_{-\infty}^{\infty}\,dp\,\exp(-ip\alpha)\Big\{ \int_0^{\infty}\,d\ta\, \exp(ip\ta - \frac{\ta^2}{2\sit^2} + \frac{\alpha\ta}{m\sit^2}) \Big\}^m \\
=& \, \big( 2^{-\frac{m}{2}-1}\pi^{\frac{m}{2}-1}\sqrt{m}\sit^m \big) \times \\
&\int_{-\infty}^{\infty}\,dp\, \exp(-\frac{1}{2}p^2\sit^2 m)\big(1+\erf(\frac{a+ipm\sit^2}{\sqrt{2}m\sit})\big)^m.
\end{split}
\end{equation*}
In order to approximate the error function, we denote $p'=\sqrt{\frac{1}{2}m\sit^2}p$ and $\lambda = \frac{\alpha}{m \sit}$. Note that $\lambda$ does not scale with $m$ or $\sit$, and its typical value is $\sqrt{\frac{2}{\pi}}$.  The integral can then be written as
\begin{equation}
\label{integral}
\begin{split}
J(\sit,\alpha) =& \big( 2^{\frac{m+1}{2}}\pi^{\frac{m}{2}-1}\sit^{m-1} \big) \times \\
& \int\,dp' \exp(-p'^2)\big( 1+\erf(\frac{i p'}{\sqrt{m}}+\frac{\lambda}{\sqrt{2}}) \big)^m.
\end{split}
\end{equation}
Note that for the argument of the error function, the real part is close to $\frac{\lambda}{\sqrt{2}} = \frac{1}{\sqrt{\pi}} < 1$, and the imaginary part approaches zero for large $m$. We then expand the error function as follows.
\begin{equation*}
\erf(x+iy) 
\,\approx\, 
\erf(x) + \frac{2i}{\sqrt{\pi}}\exp(-x^2)y + \frac{2x}{\sqrt{\pi}}\exp(-x^2)y^2,
\end{equation*}
which gives us
\begin{equation*}
\begin{split}
&\erf(\frac{i p'}{\sqrt{m}}+\frac{\lambda}{\sqrt{2}}) \\
\approx & \,\erf \big( \frac{\lambda}{\sqrt{2}} \big) 
+ \frac{2}{\sqrt{\pi}}\exp\big( -(\frac{\lambda}{\sqrt{2}})^2 \big) \big( \frac{ip'}{\sqrt{m}} \big) 
+ \frac{2}{\sqrt{\pi}}\frac{\lambda}{\sqrt{2}}\exp\big( -(\frac{\lambda}{\sqrt{2}})^2 \big)\big( \frac{p'^2}{m} \big),
\end{split}
\end{equation*}
where we kept terms only up to the order of $\lim_{m\to\infty}(1+\frac{1}{m^r})^m = 0$ as $m \rightarrow \infty$ for $r>1$. We can then approximate the $m$-th power of the above result by using the fact that
\begin{equation*}
\lim_{n\to\infty}
\frac{ \big( x+\frac{y}{\sqrt{n}} + \frac{z}{n} \big)^n }{ x^n\exp\big( \frac{z}{x}+\sqrt{n}\frac{y}{x}-\frac{1}{2}(\frac{y}{x})^2 \big)} = 1,
\end{equation*}
which allows us to write
\begin{equation*}
\begin{split}
&\big( 1+\erf(\frac{i p'}{\sqrt{m}}+\frac{\lambda}{\sqrt{2}}) \big)^m  \\
\approx & \big( 1+\erf(\frac{\lambda}{\sqrt{2}}) \big)^m 
\exp\Big[
\frac{2\sqrt{m}}{\sqrt{\pi}} C(\lambda) ip'
+ \sqrt{\frac{2}{\pi}} C(\lambda) \lambda p'^2
+ \frac{2}{\pi} C(\lambda)^2 p'^2
\Big],
\end{split}
\end{equation*}
where we have denoted
\begin{equation*}
C(\lambda) = \frac{e^{-\lambda^2/2}}{1 + \erf\big( \frac{\lambda}{\sqrt{2}} \big)}.
\end{equation*}
We then make the following approximation to the integral:
\begin{equation*}
\int_{-\infty}^{\infty}\,dp\,\exp(-p^2)\exp(ap + bp^2) = \sqrt{\frac{\pi}{1-b}}\exp\big( \frac{a^2}{4(1-b)} \big).
\end{equation*}
We can then evaluate the integral $J$ and perform the following linearization around $k'$ (the reason for the choice of $k'$ will be clear in the following subsection):
\begin{equation}
\label{lin}
\begin{split}
\E_{\simp}(\bsv^2) \approx \big( 0.727 + 0.376(k - k') \big) \,m\sit^2, \\
\Var_{\simp}(\bsv^2) \approx \big( 0.887 + 0.813(k - k') \big) \,m\sit^4.
\end{split}
\end{equation}

\end{proof}

\subsection{Density of States}

We first briefly discuss the choice of $k'$ as appeared in Eq.~(\ref{k}). We first recall that the size of the simplex,
\begin{equation*}
\alpha = \sum_{j=1}^m |\ta_j|,
\end{equation*}
is the sum of $m$ iid half-normal variables each with mean $\sqrt{\frac{2}{\pi}}\sit$ and variance $(1 - \frac{2}{\pi})\sit^2$. This means that in the limit of large $m$, $\alpha$ can be considered a normal variable with mean $\sqrt{\frac{2}{\pi}} m \sit$ and variance $(1 - \frac{2}{\pi}) m \sit^2$. We then see that in the limit of large $m$, $\alpha$ is sharply peaked at its mean (as the relative standard deviation scales as $\sqrt{\frac{1}{m}}$), as the result of the LLN (law of large numbers). This means that the probability that $\alpha$ deviates from its mean by some constant fraction scales as $e^{-m}$. \\

However, this exponential decay is compensated by the exponential increase in the number of visible configurations, which is simply $2^n$. In fact, for an $n\times n$ RBM, the contributions from the law of large numbers and entropy balance out, and a simplex whose size deviates from the typical value of $\alpha$ can still be likely generated by some visible layer configuration. We formalize this argument as follows, where $k = \frac{\alpha}{m\sit}$ is taken to be a random variable with mean $\sqrt{\frac{2}{\pi}}$ and variance $\frac{1}{n}(1-\frac{2}{\pi})$.

\begin{definition}
For a random $n\times m$ RBM, we define its {\bf density of states} at $k$, $D(n,m,k)$, to be
\begin{equation*}
D(n,m,k) = \lim_{\delta k \to 0} \frac{ \E_{\mbf{W}}\big( N(k, k+\delta k) \big) }{ \delta k} ,
\end{equation*}
where $N(k_1,k_2)$ denotes the expected number (taken over the probability measure of the weight matrix) of visible configurations that generates a simplex whose size is from $k m \sit$ to $(k + \delta k)m\sit$.
\end{definition}

\begin{remark}
For a $n\times m$ random RBM, if we take the graphs of all the simplexes generated by all the visible configurations. $D(n,m,k)$ is simply a measure of how ``densely packed" the simplexes are at $k$.
\end{remark}

\begin{proposition}
For an $n\times n$ random RBM, we denote the density of states to be $D(n,k) = D(n,n,k)$. If we let
\begin{equation*}
\begin{split}
k^o &= \sqrt{\frac{2}{\pi}}\Big( \, 1 - (\pi - 2) \sqrt{ \frac{\log 2}{\pi - 2} } \, \Big) \\
k' &= \sqrt{\frac{2}{\pi}}\Big( \, 1 + (\pi - 2) \sqrt{ \frac{\log 2}{\pi - 2} } \, \Big).
\end{split}
\end{equation*}
Then $\forall \delta k > 0$,
\begin{equation*}
\lim_{n \to \infty} D(n,k'+\delta k) = 0
\qquad
\lim_{n \to \infty} D(n,k^o-\delta k) = 0.
\end{equation*}
\end{proposition}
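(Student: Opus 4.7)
The plan is to balance two competing exponential scales: the $2^n$ entropic growth in the number of visible configurations against the Gaussian (large-deviation) concentration of the simplex size $\alpha_{\mbf{v}} = \sum_j |\ta_j|$ about its mean. The threshold values $k'$ and $k^o$ are precisely the points at which these two rates cancel.

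First I would reduce $D(n,k)$ to a one-configuration computation. For any fixed $\mbf{v}$, the components $\ta_j(\mbf{v}) = \sum_i W_{ij} v_i$ are iid $\mathcal{N}(0,\sit^2)$ by the preceding lemma on random RBMs, so the rescaled variables $X_j := |\ta_j|/\sit$ are iid half-normal with mean $\mu = \sqrt{2/\pi}$ and variance $\tau^2 = 1 - 2/\pi$, and their joint law does not depend on $\mbf{v}$. By linearity of expectation and the very definition of the density of states,
$$ D(n,k) \;=\; 2^n\, f_n(k), $$
where $f_n$ denotes the density of the sample mean $k_{\mbf{v}} = (1/n)\sum_{j=1}^n X_j$.

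Next I would apply Cram\'er's large-deviation principle (supplemented by the local CLT refinement needed to convert a tail probability into a density estimate) to write $f_n(k) \asymp e^{-nI(k)}$ up to subexponential prefactors, where $I$ is the rate function of the half-normal distribution. Substituting gives
$$ D(n,k) \;\asymp\; \exp\!\bigl(n[\log 2 - I(k)]\bigr), $$
so $D(n,k)\to 0$ whenever $I(k) > \log 2$, i.e.\ whenever the Gaussian suppression of atypical simplex sizes dominates the $2^n$ entropic bonus. Expanding $I$ quadratically about its zero gives $I(k) \approx (k-\mu)^2/(2\tau^2)$, and setting this equal to $\log 2$ yields $k_{\pm} = \mu \pm \tau\sqrt{2\log 2}$; a short algebraic simplification using $\tau = \sqrt{(\pi-2)/\pi}$ shows $k_{+} = k'$ and $k_{-} = k^o$ exactly as stated. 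For $k = k' + \delta k$ with $\delta k > 0$ fixed, the exponent $\log 2 - I(k)$ is then strictly negative by monotonicity of $(k-\mu)^2$ beyond $k_{+}$, so $D(n,k) \to 0$; the argument for $k^o - \delta k$ is symmetric.

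The main obstacle is the identification $I(k') = \log 2$, which above is established only through the Gaussian (quadratic) approximation to the exact Cram\'er rate function $I(k) = \sup_t [tk - \log\E(e^{tX})]$ of the half-normal. This approximation is exact to leading order near $\mu$, but at the threshold the normalized deviation $(k'-\mu)/\tau = \sqrt{2\log 2}$ is of order unity, so in principle higher cumulants could contribute corrections. Making the step fully rigorous would require either solving the saddle-point condition for the exact half-normal moment generating function and verifying $I(k') = \log 2$ directly, or restating the proposition as an asymptotic claim within the same Gaussian-approximation regime already invoked in the linearization of $\E_{\simp}(\bs{\varphi}^2)$ and $\Var_{\simp}(\bs{\varphi}^2)$ earlier in this section. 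The polynomial local-CLT prefactors are harmless, since they are dominated by the residual exponential $e^{-n\varepsilon}$ with $\varepsilon>0$ depending on $\delta k$.
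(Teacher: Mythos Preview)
Your argument is correct and, once you specialize to the quadratic approximation of the rate function, it coincides exactly with the paper's proof: the paper simply writes down the CLT-approximated Gaussian density $D(n,k) = 2^n \cdot \mathcal{N}\bigl(k;\sqrt{2/\pi},\,(1-2/\pi)/n\bigr)$, rewrites it as $\sqrt{n/(2\pi-4)}\,\bigl[\exp\bigl(\log 2 - (k-\mu)^2/(2\tau^2)\bigr)\bigr]^n$, and reads off that the bracket is below $1$ precisely outside $[k^o,k']$. Your detour through Cram\'er's theorem and then back to the quadratic $I(k)\approx(k-\mu)^2/(2\tau^2)$ recovers this same expression, so the honest caveat you raise about the half-normal rate function at order-one deviations is moot here: the paper never claims more than the Gaussian level of accuracy, and the values $k^o,k'$ in the statement are \emph{defined} by the quadratic condition, not by the exact Legendre transform.
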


\begin{proof}

We first note that
\begin{equation*}
\begin{split}
D(n,k)
= & 2^n \frac{1}{\sqrt{ \frac{2\pi}{n}\big( 1 - \frac{2}{\pi} \big) }}\exp\Big[ - \frac{ \big( k - \sqrt{\frac{2}{\pi}} \big)^2 }{ \frac{2}{n}\big( 1-\frac{2}{\pi} \big) } \Big] \\
= & \sqrt{ \frac{n}{2\pi\big( 1-\frac{2}{\pi} \big)} } \Big[ \exp\big( \log(2) - \frac{ \big( k - \sqrt{\frac{2}{\pi}} \big)^2 }{ 2 \big( 1-\frac{2}{\pi} \big) } \big) \Big]^n.
\end{split}
\end{equation*}
Note that exponent evaluates to $0$ at $k = k^o$ or $k = k'$, and the exponent is negative if $k>k'$ or $k<k^o$, so the proposition follows.
\end{proof}

\begin{remark}
This proposition implies that for a random $n\times n$ RBM, the largest size of the simplex generated by the visible configuration is typically $k'$, which corresponds to the mode of the joint distribution $\mbf{v}^{\star}$. It is convenient to denote the deviation from the size of the largest simplex as $\kappa = k' - k$, and the size difference between the smallest and largest simplexes as $\delta \kappa = k' - k^o$, then under this parameterization, we can write the density of states as
\begin{equation}
D(n,\kappa)
= \sqrt{ \frac{n}{2\pi - 4} } \Big[ \exp\big( (\kappa)( \Delta\kappa - \kappa) \big) \Big]^\frac{n}{2(1-\frac{2}{\pi})}.
\end{equation}
Then from Eqs.~(\ref{epv}), (\ref{varpv}), and (\ref{lin}), we see that the conditional expected value and variance of $\Pta$ can be linearized at $\kappa = 0$ as
\begin{equation*}
\begin{split}
\E( \Pta \cond \simp ) 
&= \frac{(km\sigma)^2}{m} + \E_{\simp}(\bsv^2 \cond k) \\
&\approx \big( (0.727+k'^2) -(0.376+2k')\kappa \big)n\sit^2 \\
\sqrt{ \Var( \Pta \cond \simp ) }
&= \sqrt{ \Var\big( \bsv^2 \cond k )} \\
&\approx \big( 0.942 - 0.432 \kappa \big)\sqrt{n}\sit^2.
\end{split}
\end{equation*}
\end{remark}

If we denote $A = (0.376+2k')$ and $B(\kappa) = 0.942^2 + (0.942 - 0.432 \kappa)^2$, then for sufficiently large $\kappa > 0$, we have the following approximation
\begin{equation*}
\PP\Big[ P(\kappa) > P(0) \Big] 
= \frac{1}{2} \erfc\Big( C(n,\kappa) \Big)
\approx \frac{1}{2} \frac{1}{C(n,\kappa)\sqrt{\pi}} \exp\big[ -C(n,\kappa)^2 \big],
\end{equation*}
where we have denoted
\begin{equation*}
C(n,\kappa) = \frac{ \sqrt{n}A\kappa }{\sqrt{2 B(\kappa)}},
\end{equation*}
noting that it scales with $\sqrt{n}$. Then clearly, $\forall \delta \kappa > 0$, we have
\begin{equation*}
\lim_{m\to\infty} \frac{\erfc\big( C(n,\kappa) \big)}{ C(n,\kappa)\sqrt{\pi} }\exp\big[ -C(n,\kappa)^2 \big],
\end{equation*}
meaning that the asymptotic approximation to the $\erfc$ function is valid in the limit of large $n$. \\

If we denote an instance with the random variable $\Pta$ conditioned on the simplex $ \sum_j |\ta_j| = (k' - \kappa)n\sit$ as $P(\kappa)$, then we can make the following approximation to Eq.~(\ref{equiprob}).
\begin{equation}
\label{logint}
\begin{split}
&\log\Big[ \prod_{\mbf{v}} \PP\big( P(\mbf{v}) \leq P(\mbf{v}^{\star} ) \big) \Big] \\
=& \sum_{\mbf{v}} \log\Big[ \PP\big[ P(\mbf{v}) \leq P(\mbf{v}^{\star})\big] \Big] \\
\approx & -\int_{\delta\kappa}^{\infty}\,d\kappa\,D(n,\kappa) \frac{1}{2} \frac{1}{C(n,\kappa)\sqrt{\pi}} \exp\big[ -C(n,\kappa)^2 \big],
\end{split}
\end{equation}
where $\delta\kappa > 0$ is some small constant. We formalize this approximation as follows.

\begin{proposition}
Let $\delta\kappa = \frac{1}{n}$, then in the limit of large $n$,
\begin{equation*}
\int_{0}^{+\infty}\, d\kappa \, D(n,\kappa) \log\Big[ \PP\big( P(\kappa) \leq P(0) \big) \Big] 
\approx -\int_{\delta\kappa}^{+\infty}\,d\kappa\,D(n,\kappa) \frac{1}{2} \frac{1}{C(n,\kappa)\sqrt{\pi}} \exp\big[ -C(n,\kappa)^2 \big].
\end{equation*}
\end{proposition}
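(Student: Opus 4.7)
The plan is to split the LHS integral at the cutoff $\delta\kappa = 1/n$, writing
\begin{equation*}
\int_0^{+\infty}\!\! d\kappa\, D(n,\kappa) \log\!\bigl[\PP(P(\kappa) \leq P(0))\bigr] = I_{<} + I_{>},
\end{equation*}
where $I_{<}$ integrates over $[0, 1/n]$ and $I_{>}$ over $[1/n, +\infty)$. I will then show that $I_{<}\to 0$ as $n\to\infty$ and that $I_{>}$ is asymptotically equivalent to the RHS after a pointwise replacement of $\log\PP$ by its Gaussian-tail approximation.

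For $I_{<}$, I would use two uniform estimates on $[0,1/n]$. The linearization of $\E(\Pta\mid \simp)$ in $\kappa$, established in the previous subsection, is decreasing, so $\E[P(\kappa)]\leq \E[P(0)]$ and the Gaussian model already used to obtain $\PP[P(\kappa) > P(0)] \approx \tfrac{1}{2}\erfc(C(n,\kappa))$ forces $\PP[P(\kappa)\leq P(0)] \geq 1/2$, hence $\bigl|\log \PP[P(\kappa)\leq P(0)]\bigr|\leq \log 2$. The explicit formula for $D$ shows $D(n,\kappa) = O(\sqrt{n})$ when $\kappa = O(1/n)$, since its exponent $n\kappa(\Delta\kappa - \kappa)/(2(1-2/\pi))$ is then $O(1)$. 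Combining the two bounds,
\begin{equation*}
|I_{<}| \leq \log 2 \cdot O(\sqrt{n}) \cdot (1/n) = O(1/\sqrt{n}) \to 0.
\end{equation*}

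For $I_{>}$, I would combine the two classical asymptotics $\erfc(C)\sim \exp(-C^2)/(C\sqrt{\pi})$ as $C\to\infty$ and $\log(1-x)=-x+O(x^2)$ as $x\to 0$ to get
\begin{equation*}
\log \PP[P(\kappa)\leq P(0)] = \log\bigl[1 - \tfrac{1}{2}\erfc(C(n,\kappa))\bigr] \sim -\frac{\exp(-C(n,\kappa)^2)}{2\, C(n,\kappa)\sqrt{\pi}},
\end{equation*}
which, multiplied by $D(n,\kappa)$ and integrated over $[1/n,+\infty)$, reproduces the RHS. The main obstacle is the transitional window $\kappa \in [1/n, 1/\sqrt{n}]$, where $C(n,\kappa) = O(1)$ and neither expansion above is in its asymptotic regime. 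In that window the true integrand is bounded by $\log 2 \cdot D(n,\kappa) = O(\sqrt{n})$ and the window has length $O(1/\sqrt{n})$, so its contribution is $O(1)$. One must then check that this piece is subleading compared with the dominant contribution from $\kappa$ of order one, where the exponential growth of $D(n,\kappa)$ against the Gaussian factor $\exp(-C^2)$ produces a Laplace-type saddle and the pointwise approximation is sharp. Once this dominance is verified, the asymptotic equivalence of $I_{>}$ and the RHS follows, and together with $I_{<}\to 0$ completes the proof.
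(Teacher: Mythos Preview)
Your decomposition and the handling of $I_{<}$ match the paper's proof exactly: the paper also splits at $\delta\kappa$, bounds $|\log\PP|\le\log 2$ on $[0,\delta\kappa]$, and uses $D(n,\kappa)=O(\sqrt{n})$ there (since the exponent $n\kappa(\Delta\kappa-\kappa)/(2(1-2/\pi))$ is $O(1)$ for $\kappa\le 1/n$) to get $|I_{<}|=O(1/\sqrt{n})$. For $I_{>}$ the paper is actually less careful than you: it simply asserts that on $[\delta\kappa,\infty)$ the approximation $\log(1-x)\approx -x$ followed by the erfc tail asymptotic is valid, without isolating any transitional region.

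Your attempt to control the transitional window $[1/n,1/\sqrt{n}]$ is where your argument has a gap. You claim $D(n,\kappa)=O(\sqrt{n})$ uniformly on that window, but this is false: at $\kappa=1/\sqrt{n}$ the exponent $n\kappa(\Delta\kappa-\kappa)/(2(1-2/\pi))$ is of order $\sqrt{n}$, so $D(n,\kappa)\sim \sqrt{n}\,e^{c\sqrt{n}}$ for a positive constant $c$, and the crude bound $\log 2\cdot\int D$ over that window diverges rather than being $O(1)$. So the ``$O(1)$, hence subleading'' step does not go through as written. To make the argument work at the paper's level of rigor you can simply do what the paper does and treat the erfc asymptotic as a pointwise replacement on all of $[\delta\kappa,\infty)$; if you want something sharper, you need to compare the transitional-window contribution of the \emph{true} integrand to the transitional-window contribution of the \emph{approximate} integrand (both are large and of the same order), rather than bounding the former absolutely.
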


\begin{proof}

We denote the following integral
\begin{equation}
I(n,\kappa_1,\kappa_2) = \int_{\kappa_1}^{\kappa_2}\, d\kappa \, D(n,\kappa) \log\Big[ \PP\big( P(\kappa) \leq P(0) \big) \Big].
\end{equation}
First thing to note is that the integrand is always negative as the density of states, $D(n,\kappa)$, is necessarily positive, and the log-likelihood is necessarily negative. We then break $I(n,-\infty,+\infty)$ into three parts:
\begin{equation*}
I(n,0,+\infty) = 
I(n,0,+\delta\kappa) 
+ I(n,+\delta\kappa,+\infty).
\end{equation*}
To prove the proposition, it is sufficient to show in the limit of large $n$ that the integral goes to zero for the first and last terms, and the asymptotic approximation for the error function is valid for the second term. \\

For the integral $I(n,+\delta\kappa,+\infty)$, we have $\kappa \geq +\delta\kappa$, and we can approximate the log-likelihood as
\begin{equation*}
\begin{split}
& \log \Big( \PP\big( P(\kappa) \leq P(0) \big) \Big) 
= \log \Big( 1 - \PP\big( P(\kappa) > P(0) \big) \Big) 
\approx - \frac{1}{2} \erfc\Big( \frac{\sqrt{n}A\kappa}{\sqrt{2B(\kappa)}} \Big) \\
\geq & - \frac{1}{2} \erfc\Big( \frac{ 3A }{\sqrt{2nB(0)}} \Big) 
\rightarrow 0,
\end{split}
\end{equation*}
as $n$ goes to infinity, meaning that the $\erfc$ approximation is valid. \\

For the integral $I(m,0,+\delta\kappa)$, we have $\kappa \geq +\delta\kappa$, and we obtain the following
\begin{equation*}
\log\Big( \PP\big( P(\kappa) \leq P(0) \big) \Big) \geq \log\big( \frac{1}{2} \big) \approx -0.69.
\end{equation*}
Recall that
\begin{equation*}
D(n,\kappa)
= \sqrt{ \frac{n}{2\pi - 4} } \Big[ \exp\big( (\kappa)( \Delta\kappa - \kappa) \big) \Big]^\frac{n}{2(1-\frac{2}{\pi})},
\end{equation*}
which means
\begin{equation*}
|I(n,0,+\delta\kappa)| \geq 0.69 \int_{0}^{+\delta\kappa} d\kappa\, D(n,\kappa) \rightarrow 0,
\end{equation*}
as we take $n$ to infinity. The proposition is then shown.

\end{proof}

\begin{corollary}
For sufficiently small values of $n$, the joint and marginal modes of a random $n\times n$ RBM are strongly correlated, under definition \ref{defcor}.
\end{corollary}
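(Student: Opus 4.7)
The plan is to reduce Definition~\ref{defcor} to the integral bound supplied by the preceding proposition. Taking logarithms, the required probability bound $\PP[\bigwedge_{\mathbf v} P(\mathbf v)\leq P(\mathbf v^\star)]\geq 1/2$ is equivalent to
\begin{equation*}
\sum_{\mathbf v \neq \mathbf v^\star}\log\PP\bigl[P(\mathbf v)\leq P(\mathbf v^\star)\bigr]\;\geq\;-\log 2,
\end{equation*}
and by the preceding proposition this sum is approximated (up to asymptotically vanishing error) by the integral $I(n,0,+\infty)$. The task therefore becomes the quantitative bound $|I(n,0,+\infty)|\leq \log 2$.

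I would estimate $I(n,0,+\infty)$ by splitting the $\kappa$-integration at $\delta\kappa=1/n$ and handling the two pieces separately. On the near piece $[0,1/n]$, I use the crude bound $\log\PP(P(\kappa)\leq P(0))\geq\log(1/2)$ together with $D(n,\kappa)\leq D(n,0)=\sqrt{n/(2\pi-4)}$, so that the contribution is at most $(\log 2)\cdot\sqrt{n/(2\pi-4)}\cdot(1/n)=O(1/\sqrt n)$. On the far piece $[1/n,+\infty)$, I substitute the asymptotic expansion
\begin{equation*}
\log\PP\bigl(P(\kappa)\leq P(0)\bigr)\;\approx\;-\tfrac{1}{2\sqrt\pi\,C(n,\kappa)}\exp\bigl(-C(n,\kappa)^2\bigr),
\end{equation*}
with $C(n,\kappa)=\sqrt n\,A\kappa/\sqrt{2B(\kappa)}$, and multiply against the explicit density of states $D(n,\kappa)\propto\sqrt n\,\exp[n\kappa(\Delta\kappa-\kappa)/(2(1-2/\pi))]$. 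Although $D(n,\kappa)$ is exponentially large in $n\kappa$ on $\kappa\in[0,\Delta\kappa]$, the double-Gaussian product is dominated by the $\exp(-C(n,\kappa)^2)$ factor once $\kappa\gtrsim 1/\sqrt n$, so a single-bump saddle-point estimate gives a contribution of order $O(1/\sqrt n)$ as well.

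Adding the two pieces, $|I(n,0,+\infty)|=O(1/\sqrt n)$, which certainly stays below $\log 2$ once the implicit constants are tracked carefully — reading ``sufficiently small $n$'' as $n$ controlled below the threshold at which the various asymptotic approximations invoked in the preceding proposition become tight, the bound is immediate. The main obstacle is a quantitative one rather than a conceptual one: one must carry the explicit constants $A$, $B(\kappa)$, $k'$, and $\Delta\kappa$ through the saddle-point estimate on the far piece, and also control the error from replacing the discrete sum over $\mathbf v\neq\mathbf v^\star$ by the density-of-states integral, in order to produce an effective threshold. A complementary brute-force route, available for genuinely small $n$, is to enumerate the $2^n-1$ remaining visible configurations and bound each log-probability directly using the linearized mean and variance of $P(\boldsymbol\theta)$ from the previous subsection; combining this with $\mathbb Z_2$ symmetry further halves the effective count and tightens the estimate.
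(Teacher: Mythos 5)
There is a genuine gap, and it lies in your central quantitative estimate for the far piece of the integral. You claim that for $\kappa\gtrsim 1/\sqrt n$ the product $D(n,\kappa)\exp(-C(n,\kappa)^2)$ is dominated by the Gaussian decay, yielding $|I(n,0,+\infty)|=O(1/\sqrt n)$. But the two exponents compete at different orders in $\kappa$: the density of states behaves like $D(n,\kappa)\sim\sqrt n\,\exp\bigl[n\kappa(\Delta\kappa-\kappa)/(2(1-\tfrac{2}{\pi}))\bigr]$, which is $e^{\Theta(n\kappa)}$ near $\kappa=0^+$, whereas $\exp(-C(n,\kappa)^2)=\exp\bigl(-nA^2\kappa^2/(2B(\kappa))\bigr)$ is only $e^{-\Theta(n\kappa^2)}$. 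For $\kappa$ below a fixed positive threshold $\kappa^*$ (independent of $n$) the linear entropy term wins, the combined exponent is positive, and the integrand is exponentially \emph{large} in $n$; a saddle-point evaluation therefore gives $|I(n,0,+\infty)|$ growing exponentially with $n$, not decaying as $O(1/\sqrt n)$. In plain terms: there are exponentially many visible configurations whose simplex size is only slightly below the maximal one, and each has a non-negligible chance of producing a larger marginal probability, so the union bound deteriorates as $n$ grows. This is precisely why the corollary is stated only for \emph{sufficiently small} $n$ — the claim genuinely fails asymptotically — and an argument that proves an $O(1/\sqrt n)$ bound would prove too much.

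The paper's own proof is the route you relegate to a footnote: one directly evaluates the logarithm of the integral in Eq.~(\ref{logint}) for each $n$ and verifies numerically that it exceeds $\log(\tfrac{1}{2})$ up to some finite $n_{\max}$, beyond which the bound is lost. Your closing suggestion of a brute-force enumeration for genuinely small $n$ is in that spirit and would be acceptable; the asymptotic argument you lead with is not salvageable as stated, because its conclusion contradicts the actual large-$n$ behavior of $I(n,0,+\infty)$.
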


\begin{proof}
This can be shown by directly evaluating the logarithm of the integral as given in Eq.~(\ref{logint}), and verify that the result is greater than $\log(\frac{1}{2})$, up to a certain value of $n_{\max}$.
\end{proof}


\end{document}